\definecolor{yellow}{rgb}{1,1,0}
\definecolor{lightgreen}{rgb}{0.5,1,0.5}
\definecolor{darkgray}{rgb}{0.15,0.15,0.15}   
\definecolor{lightgray}{rgb}{0.94,0.94,0.94}  
\definecolor{lightlightgray}{rgb}{0.97,0.97,0.97}  
\definecolor{lightlightblue}{rgb}{0.95,0.97,1.00}  
\definecolor{darkred}{rgb}{0.80,0.00,0.00}    
\definecolor{darkgreen}{rgb}{0.00,0.70,0.00}    
\definecolor{darkblue}{rgb}{0.00,0.00,0.70}    
\definecolor{violet}{rgb}{0.35,0.00,0.70}
\definecolor{lightblue}{cmyk}{0.25,0.15,0,0}
\definecolor{lightred}{cmyk}{0,0.25,0.15,0}
\definecolor{lightgreen}{cmyk}{0.15,0,0.25,0}
\definecolor{lightyellow}{cmyk}{0,0,0.25,0}
\definecolor{darkyellow}{rgb}{0.9,0.9,0.00}    
\definecolor{orange}{rgb}{1.00,0.50,0.00}    
\theoremstyle{plain}  
\newtheorem{thm}{Theorem}[subsection]
\newtheorem{defn}[thm]{Definition}
\newtheorem{prop}[thm]{Proposition}
\theoremstyle{remark}  
\newtheorem{rem}{Remark}
\algnewcommand\algorithmicforeach{\textbf{for each}}
\renewcommand{\Cross}{$\mathbin{\tikz [x=1.4ex,y=1.4ex,line width=.2ex, purple] \draw (0,0) -- (0.5,0.5) (0,0.5) -- (0.5,0);}$}%
\renewcommand{\Square}{$\mathbin{\tikz [x=1.4ex,y=1.4ex,line width=.2ex, brown] \draw (0,0) -- (0.5,0) -- (0.5,0.5) -- (0,0.5) -- (0,0);}$}
\renewenvironment{proof}[1][\proofname]{\par
    \pushQED{\qed}%
    \normalfont \topsep6\p@\@plus6\p@\relax
    \list{}{\leftmargin=1.5em
          \rightmargin=\leftmargin
          \settowidth{\itemindent}{\itshape#1}%
        \labelwidth=\itemindent
          \parsep=0pt \listparindent=\parindent
    }
    \item[\hskip\labelsep
        \itshape
    #1\@addpunct{.}]\ignorespaces
}{%
    \popQED\endlist\@endpefalse
}
\newenvironment{definition}[1][\definitionname]{\par
    \begin{mdframed}[backgroundcolor=gray!5,rightline=false,leftline=true, topline=false, bottomline=false, linecolor=darkblue]
        \begin{defn}[#1]
}{
    \end{defn}\end{mdframed}
}
\newenvironment{theorem}[1][\definitionname]{\par
    \begin{mdframed}[backgroundcolor=gray!5,rightline=false,leftline=true, topline=false, bottomline=false, linecolor=darkred]
        \begin{thm}[#1]
}{
    \end{thm}\end{mdframed}
}
\newenvironment{proposition}[1][\definitionname]{\par
    \begin{mdframed}[backgroundcolor=gray!5,rightline=false,leftline=true, topline=false, bottomline=false, linecolor=violet]
        \begin{prop}[#1]
}{
    \end{prop}\end{mdframed}
}
\newenvironment{remark}{\par
    \begin{mdframed}[backgroundcolor=gray!5,rightline=false,leftline=true, topline=false, bottomline=false, linecolor=black]
        \begin{rem}
}{
    \end{rem}\end{mdframed}
}
\newenvironment{alg}[1][\definitionname]{\par
\begin{mdframed}[backgroundcolor=blue!4,rightline=false,leftline=false, topline=false, bottomline=false, linecolor=darkblue, innerbottommargin=-0.1cm, innertopmargin=-0.1cm, 
    innerleftmargin=-0.01cm, innerrightmargin=-0.01cm]
\begin{algorithm}[H]
\caption{#1}
}{
    \end{algorithm}\end{mdframed}
}
\definecolor{theorembar}{rgb}{0.95,0.00,0.00}      
\definecolor{defnbar}{rgb}{0.00,0.00,0.95}         
\definecolor{propositionbar}{rgb}{0.58,0.00,1.00}  
\definecolor{corollarybar}{rgb}{0.00,0.95,0.95}    
\definecolor{lemmabar}{rgb}{1.00,0.00,0.78}        
\definecolor{warningbar}{rgb}{0.90,0.00,0.00}      
\definecolor{propertybar}{rgb}{1.00,1.00,0.00}  
\definecolor{examplebar}{rgb}{0.00,0.00,0.00}      
\definecolor{remarkbar}{rgb}{1.00,0.53,0.00}       
\providecommand*{\hr}[1][class-arg]{%
    \hspace*{\fill}\hrulefill\hspace*{\fill}
    \vskip 0.65\baselineskip
}
\newcommand{\X}{\mathcal{X}}
\newcommand{\Y}{\mathcal{Y}}
\newcommand{\R}{\mathbb{R}}
\renewcommand{\span}{\mathrm{Span}}
\newcommand{\D}{\mathcal{D}}
\renewcommand{\P}{\mathbb{P}}
\newcommand{\E}{\mathbb{E}}
\newcommand{\A}{\mathcal{A}}
\renewcommand{\H}{\mathcal{H}}
\newcommand{\N}{\mathbb{N}}
\DeclareMathOperator*{\argmax}{arg\,max}
\DeclareMathOperator*{\argmin}{arg\,min}
\title{Overview of AdaBoost: Reconciling its views to better understand its dynamics}
\author{%
  Perceval Beja-Battais \\
  Centre Borelli\\
  ENS Paris-Saclay, Université Paris-Saclay\\
  91190 Gif-sur-Yvette\\
  \texttt{\href{mailto:perceval.beja-battais@ens-paris-saclay.fr}{perceval.beja-battais@ens-paris-saclay.fr} } \\
}
\begin{document}

\maketitle

{
  \hypersetup{linkcolor=black}
  \tableofcontents
}

\newpage

\begin{abstract}
Boosting methods have been introduced in the late 1980's. They were born following the theoritical aspect of PAC learning. The main idea 
of boosting methods is to combine weak learners to obtain a strong learner. The weak learners are obtained iteratively by an heuristic 
which tries to correct the mistakes of the previous weak learner. In 1995, Freund and Schapire~\cite{freund_decision-theoretic_1997-1} 
introduced AdaBoost, a boosting algorithm that is still widely used today. Since then, many views of the algorithm have been proposed 
to properly tame its dynamics. In this paper, we will try to cover all the views that one can have 
on AdaBoost. We will start with the original view of Freund and Schapire before covering the different views and
unify them with the same formalism.
We hope this paper will help the non-expert reader to better understand the dynamics of AdaBoost and how the different
views are equivalent and related to each other.\vspace{1cm}\newline
\noindent\textbf{Keywords:}
Boosting, AdaBoost, dynamical systems, PAC learning, gradient descent, mirror descent, additive models, 
entropy projection, diversity, margin, generalization error, kernel methods, product of experts, interpolating classifiers, double descent
\end{abstract}

\section{Introduction, problematic \& notations}

\subsection{Introduction}

Machine learning is a dynamic field that encompasses a wide range of algorithms and methodologies, aiming to enable computational systems to 
learn from data and make predictions or decisions without being explicitly programmed. The growth of computational power, the availability of 
vast amounts of data, and advancements in statistical and algorithmic techniques have propelled the field of machine learning to new heights~\cite{lecun2015deep, jordan2015machine}. 
The application of machine learning methods are ubiquitous in our daily lives, from the personalized recommendations on internet platforms~\cite{wei2017collaborative, steck2021deep, smith2017two} to 
medicine~\cite{chen2018rise, vamathevan2019applications, cruz2006applications}, finance~\cite{dixon2020machine, emerson2019trends, culkin2017machine, gogas2021machine}, 
or autonomous driving~\cite{kiran2021deep, uccar2017object, mozaffari2020deep}. One prominent subset of machine learning methods that has gained considerable 
attention and achieved remarkable success in various applications is boosting~\cite{freund_decision-theoretic_1997-1, schapire2013boosting}.

At its core, machine learning involves the development of models that can automatically extract patterns and insights from data. These models learn from experience,
 iteratively refining their performance through exposure to labeled examples. By analyzing patterns and relationships within the data, machine learning algorithms 
 can make accurate predictions on new, unseen instances, thereby uncovering hidden knowledge and informing decision-making processes.

Boosting algorithms are a class of machine learning methods (more precisely of ensemble methods~\cite{dietterich2000ensemble, zhou2014ensemble, sagi2018ensemble}) that aim to enhance the performance of weak learners by iteratively combining their predictions 
to create a more powerful and accurate model~\cite{freund_decision-theoretic_1997-1, schapire2013boosting}. These methods were introduced in the 1990's and have since emerged as a cornerstone of contemporary machine learning research. 
Boosting algorithms iteratively train a sequence of weak models, each focusing on the instances that were previously misclassified or had the largest prediction errors. 
By assigning higher weights to these challenging instances, subsequent models within the boosting framework can effectively address their misclassification and improve overall accuracy.

One of the key advantages of boosting methods is their ability to adaptively allocate resources to challenging examples, thereby mitigating the impact of noise and outliers. 
By emphasizing the instances that are difficult to classify, boosting algorithms excel at capturing complex decision boundaries and achieving high predictive performance. 
Furthermore, boosting methods are versatile and can be applied to a variety of learning tasks, including classification, regression, and ranking problems.

Boosting algorithms come in various flavors, each with its unique characteristics and strengths. One of the earliest and most influential boosting methods is AdaBoost~\cite{freund_decision-theoretic_1997-1}, 
short for Adaptive Boosting. AdaBoost iteratively adjusts the weights of misclassified instances, with subsequent models paying greater attention to these misclassified examples. 
Another popular boosting method is Gradient Boosting~\cite{natekin2013gradient, bentejac2021comparative, friedman2002stochastic}, which leverages the concept of gradient descent to optimize a loss function by adding weak models in a stage-wise manner. 
XGBoost~\cite{chen2016xgboost, ogunleye2019xgboost, dhaliwal2018effective} and LightGBM~\cite{ke2017lightgbm, sun2020novel, wang2017lightgbm} are notable implementations of gradient boosting 
algorithms that have gained widespread adoption due to their scalability and high performance.

\subsection{Problematic}
Boosting methods have emerged for the first ones in the early 1990's, and the proper modern formalization of boosting 
methods has been made in 1990 by Schapire~\cite{schapire1990strength}. At that time, boosting methods were seen as
PAC learning algorithms. Later, Freund and Schapire~\cite{freund1995boosting} collaborated to work deeper on boosting algorithms, and 
in 1995, they introduced AdaBoost~\cite{freund_decision-theoretic_1997-1}, which is still widely used today. Since then, many researchers
have tried to understand the dynamics of AdaBoost which were not sufficiently understood. To this day, there is still a lot that
we do not know about AdaBoost, especially about the convergence properties of the algorithm when we increase the number of weak learners~\cite{belanich2012convergence}. 
The goal of this paper was in the first place to try to answer the question of the cyclic behavior of AdaBoost which was proven in some cases 
in 2004 by Rudin et al.~\cite{rudin2004dynamics}, and that can be observed in many toy examples (see Fig.~\ref{fig:AdaBoost_diversity}). However, the same authors addressed an open problem in 2012~\cite{rudin2012open} for the general
case: \textit{Does AdaBoost always cycle?} Some answers have been found very recently, in 2023, by Belanich and Ortiz~\cite{belanich2012convergence}.
Indeed, by studying the ergodic dynamics of AdaBoost, the authors have shown this conjecture as an intermediate result, which we will quickly
present in the last subsection.

Yet, this overview aims to cover all views that one can have on AdaBoost. We tried to use the same formalism for all views, to try and unify them.

\subsection{Notations}
In all what follows, we will consider a classification problem. We will denote by $\X$ the input space, and by $\Y$ the output space.
$\X$ is a compact subset of $\R^d$, and $\Y$ is a finite set, which will be $\{-1, 1\}$ if the problem is binary, 
and $\{1,\dots,K\}$ if the problem is multiclass. The sequence of weights produced by AdaBoost will be denoted $W_0,\dots,W_{T-1}$,
and the sequence of classifiers will be denoted $h_1,\dots,h_T$, where $T$ is the number of iterations of the algorithm. 
The final classifier will be denoted $H$. We will denote by $\mathbbm{1}_A$ the indicator function of the set $A$. For binary problems, we will
use the notation $\eta$ or $\eta_h$ to denote the vector $(y_1 h(x_1), \dots, y_m h(x_m))$ which we call a dichotomy. $\eta_h(i)$ is 1 if 
the classifier $h$ is right, and $-1$ if it is wrong. Also, we will denote by $\lambda$ the standard Lebesgue measure.

\section{Views of AdaBoost}
\subsection{The original view: a PAC learning algorithm}\label{sec:AdaBoost_as_a_PAC_learning_algorithm}
\subsubsection{What is a PAC learning algorithm?}
\begin{definition}[PAC learning algorithm]
    In our setting, we say that a hypothesis space $\H$ is PAC learnable \textbf{if there exists an algorithm $A$} such that:
    \begin{itemize}
        \item For any distribution $D$ over $\X$, and for any $0 < \epsilon, \delta < \frac{1}{2}$, the algorithm $A$ outputs a classifier $h \in \H$ such that with
        probability at least $1-\delta$, we have:
        \begin{equation}
            \P_{(x,y) \sim D}(h(x) \neq y) \leq \epsilon
        \end{equation}
        \item The number of examples $m$ needed to achieve this bound is polynomial in $\frac{1}{\epsilon}$, $\frac{1}{\delta}$ and $d$.
    \end{itemize}
\end{definition}

\subsubsection{AdaBoost: the original formulation} 
The first view, when AdaBoost was introduced, was to see it as a PAC learning algorithm. The idea was to create a boosting algorithm
that iteratively takes into accound the errors made by the previous classifiers, and to try to correct them. 
Freund and Schapire~\cite{freund_decision-theoretic_1997-1} proposed the following algorithm, which is the original formulation of AdaBoost. They
also verified that the algorithm is a PAC learning algorithm, and give bounds on the number of examples needed to achieve a given
accuracy. The pseudo-code implementation of the algorithm is given in Alg.~\ref{alg:Original_AdaBoost_discrete}.

\begin{alg}[Original formulation of AdaBoost (discrete) for a binary classification problem]\label{alg:Original_AdaBoost_discrete}
\KwData{$(x_1, y_1),\dots, (x_m, y_m) \in \X \times \{-1, 1\}$}
Initialize $W_0(i) = \frac{1}{m}$ for $i = 1,\dots,m$ \\
\For{$t = 1,\dots,T$}
    {
    Train a weak learner $h_t : \X \rightarrow \{-1, 1\}$ w.r.t.~the distribution $W_{t-1}$ \\
    Compute $\displaystyle \epsilon_t = \sum_{i=1}^m W_{t-1}(i) \mathbbm{1}_{h_t(x_i) \neq y_i}$ the weighted error of $h_t$ \\
    Compute $\displaystyle \alpha_t = \frac{1}{2} \log \frac{1 - \epsilon_t}{\epsilon_t}$ \\
    Update $\displaystyle W_t(i) = \frac{W_{t-1}(i)\exp(-\alpha_t y_i h_t(x_i))}{Z_t}$ for $i = 1,\dots,m$, where $\displaystyle Z_t = \sum_{i=1}^{m} W_{t-1}(i)\exp(-\alpha_t y_i h_t(x_i))$
    }
\Return $\displaystyle H: x\in \X \mapsto \mathrm{sign}\left(\sum_{t=1}^T \alpha_t h_t(x)\right) \in \Y$
\end{alg}

\begin{remark}
    Freund and Schapire designed this algorithm for weak learners, i.e.~classifiers that have an error rate slightly better than random guessing.
    They proved that if the weak learners error rates are $\epsilon_1, \dots, \epsilon_T$, then the error rate of the final classifier is bounded by
    \begin{equation}
        \epsilon \leq 2^T \prod_{t=1}^T \sqrt{\epsilon_t(1 - \epsilon_t)}
    \end{equation}
\end{remark}

\begin{remark}
    The first nad most natural set of classifiers that come to mind when we think about weak learners are decision stumps, i.e.~classifiers that
    are constant on a half-space, and constant on the other half-space. Most of AdaBoost applications in practice use decision stumps as weak learners.
    We also can think of decision trees of an arbitrary depth as weak learners, but in practice, they may not be used because they are slower to train.
\end{remark}

\begin{remark}
    The first few things we can observe are that first the final classifier does not take in consideration the potential relevance 
    (to be defined) of each classifier in the final vote (the weights of the classifiers are all equal to $1$), and second that 
    the sequence of weights are updated only considering the error made by the previous classifier, and not the ones before. 
    We can easily see that both sequence of weights and classifiers are order 1 Markov chains, therefore we can legitimately 
    ask ourselves if the algorithm can get stuck in cycles, which can be the case in some examples~\cite{rudin_dynamics_2003}.
\end{remark}

However, at that time, Freund and Schapire did not give any insight of how they chose their updating rule for the weights.
They just said that they wanted to correct the errors made by the previous classifier, and that they wanted 
to give more importance to the examples that were misclassified. In short, they designed a
heuristic algorithm, and proved that it was a PAC learning algorithm. Later on, it appeared that
AdaBoost does not only come from a heuristic, but also from a theoritical point of view, as we will see in the next sections.

Freund and Schapire also gave a multiclass version of AdaBoost in 1996~\cite{freund1996experiments}, which base on the same heuristic.
We give the algorithm in Alg.~\ref{alg:Original_AdaBoost_discrete_multiclass}.

\begin{alg}[Original formulation of AdaBoost (discrete) for a multiclass classification problem]\label{alg:Original_AdaBoost_discrete_multiclass}
\KwData{$(x_1, y_1),\dots, (x_m, y_m) \in \X \times \{1,\dots,K\}$}
Initialize $W_0(i) = \frac{1}{m}$ for $i = 1,\dots,m$ \\
\For{$t=1,\dots,m$}{
    Train a weak learner $h_t : \X \rightarrow \{1,\dots,K\}$ w.r.t.~the distribution $W_{t-1}$ \\
    Compute $\epsilon_t = \sum_{i=1}^m W_{t-1}(i) \mathbbm{1}_{h_t(x_i) \neq y_i}$ the weighted error of $h_t$ \\
    Compute $\alpha_t = \frac{1 - \epsilon_t}{\epsilon_t}$ \\
    Update $W_t(i) = \frac{W_{t-1}(i)(\alpha_t\mathbbm{1}_{h_t(x_i) = y_i} + \mathbbm{1}_{h_t(x_i)\ne y_i})}{Z_t}$ for $i = 1,\dots,m$, where $Z_t = \sum_{i=1}^{m} W_{t-1}(i)\exp(-\alpha_t \mathbbm{1}_{h_t(x_i) = y_i})$
    }
\Return $H: x\in \X \mapsto \argmax_{y \in \Y} \sum_{t=1}^T \left(\log\frac{1}{\alpha_t}\right) \mathbbm{1}_{h_t(x) = y}$
\end{alg}

\subsubsection{Real AdaBoost}
The original formulation of AdaBoost is a discrete version of the algorithm. However, it is possible to define a real version of AdaBoost,
as Freund and Schapire did~\cite{freund1996experiments}. The main difference is that the weak learners are not restricted to output
only $\{-1, 1\}$, but are functionals $h:X\times Y \rightarrow [0,1]$. 
\begin{definition}[Pseudo-loss and Update Rule for real AdaBoost]
    To each couple $(x,y)$, we associate a \textit{plausibility} $h(x,y)$ that $x$ belongs to the class $y$ (plausibility instead of probability, because it is not one). 
    We then define the \textit{pseudo-loss} $\epsilon_t$ of $h_t$ at iteration $t$ as:
    \begin{equation}
        \epsilon_t = \frac{1}{2} \sum_{(i,y) : y \ne y_i} W_{t-1}(i,y) (1 - h_t(x_i, y_i) + h_t(x_i, y))
    \end{equation}
    where $W_{t-1}(i,y)$ is the weight of the couple $(x_i, y)$ at iteration $t-1$.
    Finally, we can define the weight update rule, following the same principle, as:
    \begin{equation}
        \begin{aligned}
            W_0(i,y) &= \frac{1}{mK}, \quad \forall i \in \{1,\dots,m\}, \forall y \in \Y \\
            W_t(i,y) &= \frac{W_{t-1}(i,y) \alpha_t^{\frac{1}{2}(1 - h_t(x_i, y_i) + h_t(x_i, y))}}{Z_t}
        \end{aligned}
    \end{equation}
    where $Z_t$ is a normalization factor.
\end{definition}
The algorithm now writes as in Alg.~\ref{alg:Original_AdaBoost_real}.

\begin{alg}[Original formulation of AdaBoost (real) for a multiclass classification problem]\label{alg:Original_AdaBoost_real}
\KwData{$(x_1, y_1),\dots, (x_m, y_m) \in \X \times \Y$}
Initialize $W_0(i,y) = \frac{1}{mK}$ for $i = 1,\dots,m$, $y \in \Y$ \\
\For{$t=1,\dots,T$}{
    Train a weak learner $h_t:\X \times \Y \rightarrow [0,1]$ w.r.t.~the distribution $W_{t-1}$ \\
    Compute $\epsilon_t = \frac{1}{2} \sum_{(i,y) : y \ne y_i} W_{t-1}(i,y) (1 - h_t(x_i, y_i) + h_t(x_i, y))$ \\
    Compute $\alpha_t = \frac{1 - \epsilon_t}{\epsilon_t}$ \\
    Update $W_t(i,y) = \frac{W_{t-1}(i,y) \alpha_t^{\frac{1}{2}(1 - h_t(x_i, y_i) + h_t(x_i, y))}}{Z_t}$ \\
    }
\Return{$H: x\in \X \mapsto \argmax_{y \in \Y} \sum_{t=1}^T \left(\log\frac{1}{\alpha_t}\right) h_t(x, y)$}
\end{alg}

\subsection{AdaBoost as successive optimization problems}\label{sec:Optimization_views_of_AdaBoost}
\subsubsection{AdaBoost as a gradient descent}\label{sec:AdaBoost_as_a_gradient_descent_algorithm}

In 1999, Mason et al.~\cite{mason1999boosting} proposed a new view of boosting. They saw any boosting algorithm, 
including AdaBoost, as a gradient descent on the set of linear combinations of classifiers in $\H$.
\begin{definition}[Optimization problem for AdaBoost as a gradient descent]\label{def:Optimization_problem_for_AdaBoost_as_a_gradient_descent}
    Let $\langle, \rangle$ be an inner product on $\span(\H)$. Define a \textit{cost function}
    $C: \span(\H) \rightarrow \R$. We define the optimization problem associated to the cost function $C$ as:
    \begin{equation}
        \min_{H \in \span(\H)} C(H)
    \end{equation}
\end{definition}

\begin{definition}[Margin and margin cost-functionals]
    We define the margin of a classifier $h \in \H$ as
    \begin{equation}
        \gamma(h) = \langle h(x), y \rangle = \frac{1}{m}\sum_{i=1}^m h(x_i)y_i.
    \end{equation}
    and the margin cost-functionals as
    \begin{equation}
        C(h) = \frac{1}{m}\sum_{i=1}^{m} c(h(x_i)y_i)
    \end{equation}
    where $c$ is a differentiable function of the margin $\gamma$.
\end{definition}
We can now prove Thm.~\ref{thm:AdaBoost_as_a_gradient_descent_algorithm}.

\begin{theorem}[AdaBoost as a gradient descent]\label{thm:AdaBoost_as_a_gradient_descent_algorithm}
    AdaBoost is a gradient descent algorithm on the optimization problem defined in Def.~\ref{def:Optimization_problem_for_AdaBoost_as_a_gradient_descent} for the
    margin-cost functional $C$ with $c(\gamma) = \exp{-\gamma}$.
\end{theorem}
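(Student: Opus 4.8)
The plan is to recognize AdaBoost as the AnyBoost template of Mason et al.\ instantiated with the exponential margin cost $c(\gamma)=e^{-\gamma}$, the empirical inner product $\langle F,G\rangle=\frac{1}{m}\sum_{i=1}^{m}F(x_i)G(x_i)$ on $\span(\H)$, and an \emph{exact line search} for the step sizes. Write $H_0=0$ and $H_t=\sum_{s=1}^{t}\alpha_s h_s$ for the running combination, so that $W_{t-1}$, $h_t$, $\alpha_t$ and $H_t$ are exactly the objects produced by Alg.~\ref{alg:Original_AdaBoost_discrete}. There are two things to verify: (i) the weak learner $h_t$ that AdaBoost fits against $W_{t-1}$ is a steepest-descent direction of $C$ at $H_{t-1}$, i.e.\ it maximizes $-\langle\nabla C(H_{t-1}),h\rangle$ over $h\in\H$; and (ii) the coefficient $\alpha_t=\tfrac12\log\tfrac{1-\epsilon_t}{\epsilon_t}$ is the exact minimizer of $\alpha\mapsto C(H_{t-1}+\alpha h_t)$.

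First I would compute the functional gradient of $C$. Since $C(H)=\frac1m\sum_i e^{-y_iH(x_i)}$ depends on $H$ only through its values on the sample, its directional derivative at $H$ in a direction $g$ equals $-\frac1m\sum_i e^{-y_iH(x_i)}y_i\,g(x_i)$, so the Riesz representative with respect to $\langle\cdot,\cdot\rangle$ is the function with $\nabla C(H)(x_i)=-e^{-y_iH(x_i)}y_i$. The crucial observation is then obtained by unrolling the weight recursion $W_t(i)=W_{t-1}(i)e^{-\alpha_t y_ih_t(x_i)}/Z_t$ with $W_0(i)=\tfrac1m$, which gives $W_t(i)=e^{-y_iH_t(x_i)}\big/\!\big(m\prod_{s=1}^{t}Z_s\big)$; in particular $e^{-y_iH_{t-1}(x_i)}$ is a fixed positive multiple of $W_{t-1}(i)$, the multiple not depending on $i$. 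Hence
\[
-\langle\nabla C(H_{t-1}),h\rangle=\tfrac1m\sum_{i}e^{-y_iH_{t-1}(x_i)}y_ih(x_i)=\Big(\prod_{s=1}^{t-1}Z_s\Big)\sum_{i}W_{t-1}(i)y_ih(x_i)=\Big(\prod_{s=1}^{t-1}Z_s\Big)\bigl(1-2\epsilon(h)\bigr),
\]
where $\epsilon(h)=\sum_{i:\,h(x_i)\ne y_i}W_{t-1}(i)$ is the weighted error of $h$ under $W_{t-1}$. As $\prod_{s<t}Z_s>0$, maximizing $-\langle\nabla C(H_{t-1}),h\rangle$ over $h$ is the same as minimizing the weighted error, which is exactly AdaBoost's rule for choosing $h_t$; this settles (i).

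For (ii) I would expand the cost along the chosen direction with the same substitution: $C(H_{t-1}+\alpha h_t)=\big(\prod_{s<t}Z_s\big)\sum_i W_{t-1}(i)e^{-\alpha y_ih_t(x_i)}=\big(\prod_{s<t}Z_s\big)\bigl[(1-\epsilon_t)e^{-\alpha}+\epsilon_t e^{\alpha}\bigr]$, which is strictly convex in $\alpha$. Setting the derivative to zero gives $e^{2\alpha}=(1-\epsilon_t)/\epsilon_t$, i.e.\ $\alpha=\tfrac12\log\tfrac{1-\epsilon_t}{\epsilon_t}=\alpha_t$: AdaBoost's coefficient is exactly the line-search minimizer. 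As a bonus, the minimal value is $\prod_{s\le t}Z_s$ with $Z_t=2\sqrt{\epsilon_t(1-\epsilon_t)}$, which recovers the standard $\prod_t Z_t$ bound on the training error and confirms consistency with the normalization $Z_t$. Combining (i) and (ii) identifies AdaBoost with functional gradient descent (with exact line search) on $C$ for $c(\gamma)=e^{-\gamma}$, as claimed.

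I expect the main obstacle to be conceptual rather than computational: making the ``gradient on $\span(\H)$'' precise. Since $\span(\H)$ is infinite-dimensional while $C$ factors through evaluation at the $m$ sample points, one must either pass to the finite-dimensional quotient $\R^m$ (on which $\nabla C$ is literally a vector) or simply accept that $\nabla C(H)$ is determined only up to functions vanishing on the sample, which is harmless because every quantity entering the argument depends on sample values alone. A second, minor point of care: AdaBoost only asks for a weak learner with weighted error $<\tfrac12$, which guarantees $-\langle\nabla C(H_{t-1}),h_t\rangle>0$, i.e.\ a genuine descent direction but not necessarily the steepest one; so the sharp statement is that AdaBoost instantiates the AnyBoost scheme with the greedy steepest-direction rule and exact-line-search steps, the idealized weak learner being the one that realizes the $\argmax$ above.
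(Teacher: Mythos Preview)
Your proof is correct and follows essentially the same route as the paper: both compute the functional gradient of the exponential margin cost, identify $e^{-y_iH_{t-1}(x_i)}$ with $W_{t-1}(i)$ up to a positive constant, and conclude that maximizing $-\langle\nabla C(H_{t-1}),h\rangle$ is equivalent to minimizing the weighted error. Your version is in fact more complete than the paper's: you make the proportionality constant explicit by unrolling the weight recursion, and you additionally verify point (ii) --- that $\alpha_t=\tfrac12\log\tfrac{1-\epsilon_t}{\epsilon_t}$ is the exact line-search minimizer along $h_t$ --- which the paper's proof omits entirely; your closing remarks on the quotient by sample-vanishing functions and on the weak-learning assumption are also more careful than anything in the paper.
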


\begin{proof}
At each iteration $t$, we will denote by $H_t$ the current resulting classifier, which writes
\begin{equation}
    H_t = \sum_{i=1}^t \alpha_i h_i
\end{equation}
Now, consider the inner product (recall that $m$ is the number of training examples)
\begin{equation}
    \langle h,g \rangle = \frac{1}{m}\sum_{i=1}^m h(x_i)g(x_i).
\end{equation}
For AdaBoost, fix $c(\gamma) = \exp(-\gamma)$, thus the cost function we want to optimize over the set of classifiers $\H$ is
\begin{equation}
    C(h) = \frac{1}{m}\sum_{i=1}^{m} \exp(-y_i h(x_i))
\end{equation}
To find the next classifier in the sequence, we have to find the classifier $h_t$ 
that minimizes the weighted error $\epsilon_t$. With a reasonable cost function
$c$ of the margin (i.e.~monotonic decreasing), it is equivalent to finding the classifier
$h_t$ that maximizes $-\langle \nabla C(h_{t-1}), h_t \rangle$. Indeed, we have
on the one hand
\begin{equation}
  \nabla C(h) = -\frac{1}{m}y_i e^{-y_i h(x_i)}\mathbbm{1}_{x = x_i}
\end{equation}
which leads to
\begin{equation}
  \begin{aligned}
    -\langle \nabla C(H_{t-1}), h_t \rangle &= \frac{1}{m^2}\sum_{i=1}^m y_i h_t(x_i) e^{-y_i H_{t-1}(x_i)} \\
    &\propto \frac{1}{m^2}\sum_{i=1}^m y_i h_t(x_i) W_{t-1}(x_i) \\
  \end{aligned}
\end{equation}
up to a normalization constant. That means that maximizing $-\langle \nabla C(H_{t-1}), h_t \rangle$ (i.e.~gradient descent step) is equivalent to
minimizing $\sum_{i=1}^m W_{t-1}(x_i) \mathbbm{1}_{h_t(x_i) \neq y_i}$, which is the weighted error of $h_t$. Thus,
the update rule for this gradient descent is the same as the one of AdaBoost.
\end{proof}
That new view of AdaBoost is interesting because it gives both theoritical and intuitive justification of the algorithm. For some
cost function $c$ that ponderates how important it is to classify correctly our observations,
we look for the direction for which the cost function decreases the most, and we take a step in that direction.
We can now formulate a new (equivalent) version of discrete AdaBoost which is written in Alg.~\ref{alg:Gradient_AdaBoost}.

\begin{alg}[AdaBoost as a gradient descent]\label{alg:Gradient_AdaBoost}
\KwData{$(x_1, y_1),\dots, (x_m, y_m) \in \X \times \{-1, 1\}$}
Initialize $W_1(i) = \frac{1}{m}$ for $i = 1,\dots,m$ \\
Initialize $h_0(x) = 0$ \\
\For{$t=1,\dots,T$}{
  Compute $h_t = \argmax_{h \in \H} -\langle \nabla C(h_{t-1}), h\rangle$\\
  Let $\alpha_t = \frac{1}{2} \log \frac{1 - \epsilon_t}{\epsilon_t}$ \\
  Let $H_t = \frac{H_{t-1} + \alpha_t h_t}{\sum_{s=1}^{t} |\alpha_s|}$ \\
  Update $W_t = \frac{c'(y_i H_t(x_i))}{\sum_{i=1}^{m}c'(y_i H_t(x_i))}$ \\
  }
\Return{$H_T$}
\end{alg}

\subsubsection{AdaBoost as an additive model}\label{sec:AdaBoost_as_an_additive_model}
\paragraph{Additive models}\label{subsec:Additive_models}
As seen in the previous subsection, at each iteration the classifier which is produced
by AdaBoost can be written:
\begin{equation}
    H_t = \sum_{i=1}^t \alpha_i h_i
\end{equation}
This is an example of what we call an additive model. A subset of the additive models are the \textit{additive regression models}. 
\begin{definition}[Additive regression models]
These models have the form:
    \begin{equation}
        H(x) = \sum_{i=1}^m h_i(x_i)
    \end{equation}
    where each $h_i$ is a function of only one data point $x_i$.
\end{definition}
\begin{remark}
    A way to find the good functions $h_i$ is to use the \textit{backfitting algorithm}~\cite{friedman1981projection}.
    A backfitting update writes:
    \begin{equation}
        h_i(x_i) = \E\left(y - \sum_{j \ne i} h_j(x_j) | x_i\right)
    \end{equation}
\end{remark}
In our case, for more general additive model as the one we have in AdaBoost, we can use the following update:
\begin{equation}
    (\alpha_t,h_t) = \argmin_{\alpha\in\R, h\in\H} \E\left(C(y - H_{t-1}(x) - \alpha h(x))\right)
\end{equation}
for a given cost function $C$. This update is called the \textit{greedy forward stepwise approach}.

\begin{remark}
    For classification problems, we can use Bayes theorem because all we have to know, in order to
    produce a good classifier, is the conditional probabilities $\P(y=k|x)$ for all classes $k$. 
    For instance, for a binary classification problem, a subset of additive models 
    are \textit{additive logistic binary classification models}, which write:
    \begin{equation}
        H(x) = \sum_{t=1}^T h_t(x) = \log \frac{\P(y=1|x)}{\P(y=-1|x)}
    \end{equation}
    That directly implies that 
    \begin{equation}
        \P(y=1|x) = \frac{e^{H(x)}}{1 + e^{H(x)}}
    \end{equation}
\end{remark}

\paragraph{AdaBoost as an additive model}

In~\cite{friedman2000additive}, Friedman, Hassie and Tibshirani show two main results considering AdaBoost. The first one is the following:
\begin{proposition}[AdaBoost as an additive model]
    Discrete AdaBoost can be seen as an additive logistic regression
    model via Newton updates for the loss $C(\gamma) = e^{-\gamma}$.
\end{proposition}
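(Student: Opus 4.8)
The plan is to reproduce, at the population level, the argument of Friedman, Hastie and Tibshirani: take $J(H)=\E\!\left[e^{-yH(x)}\right]$ and show that one iteration of Discrete AdaBoost is exactly one functional Newton-like step of $J$ toward its minimizer, whose minimizer is (half) the logit. First I would record the population minimizer: conditioning on $x$,
\[
\E\!\left[e^{-yH(x)}\mid x\right]=\P(y=1\mid x)e^{-H(x)}+\P(y=-1\mid x)e^{H(x)},
\]
which is minimized at $H^*(x)=\tfrac12\log\frac{\P(y=1\mid x)}{\P(y=-1\mid x)}$, equivalently $\P(y=1\mid x)=e^{2H^*(x)}/(1+e^{2H^*(x)})$. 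This is precisely the additive logistic model of the earlier remark (up to the harmless factor $2$ relating $H$ to the logit), so the additive estimate $H_T=\sum_t\alpha_t h_t$ produced by AdaBoost is literally fitting half the log-odds — which is what ``additive logistic regression model'' means in the statement.

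Next I would carry out the Newton step. Given the current iterate $H_{t-1}$, seek an update $H_{t-1}+\alpha h$ with $h$ a $\{-1,1\}$-valued weak learner, and expand $J(H_{t-1}+\alpha h)$ to second order in the increment $\alpha h(x)$, using $y^2=1$ (and $h(x)^2=1$):
\[
\E\!\left[e^{-y(H_{t-1}(x)+\alpha h(x))}\right]\approx\E\!\left[e^{-yH_{t-1}(x)}\Big(1-\alpha y h(x)+\tfrac{\alpha^2}{2}\Big)\right].
\]
The crucial point is that $y^2h(x)^2=1$, so the quadratic (Hessian) term is a constant independent of $h$; minimizing the surrogate over $h$ for fixed $\alpha>0$ therefore amounts to maximizing $\E_w[\,y\,h(x)\,]$ with $w(x)\propto e^{-yH_{t-1}(x)}$. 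Conditioning on $x$, the pointwise optimum is $h_t(x)=\mathrm{sign}\,\E_w[y\mid x]$, and since $yh\in\{-1,1\}$ this is equivalent to minimizing the weighted error $\epsilon_t=\sum_i W_{t-1}(i)\mathbbm{1}_{h_t(x_i)\ne y_i}$ of Alg.~\ref{alg:Original_AdaBoost_discrete}.

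I would then fix $h=h_t$ and pick $\alpha$ by exact one-dimensional minimization rather than from the quadratic model (this is the ``Newton-\emph{like}'', not pure-Newton, aspect): since $yh_t(x)\in\{-1,1\}$,
\[
J(H_{t-1}+\alpha h_t)\ \propto\ e^{-\alpha}(1-\epsilon_t)+e^{\alpha}\epsilon_t ,
\]
whose derivative vanishes at $\alpha_t=\tfrac12\log\frac{1-\epsilon_t}{\epsilon_t}$, exactly AdaBoost's coefficient. Setting $H_t=H_{t-1}+\alpha_t h_t$ turns the weights into $w(x)\propto e^{-yH_{t-1}(x)}e^{-\alpha_t y h_t(x)}$, i.e.\ $W_t(i)\propto W_{t-1}(i)\exp(-\alpha_t y_i h_t(x_i))$, which is the reweighting step of Alg.~\ref{alg:Original_AdaBoost_discrete}. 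Passing from this population description to the sample algorithm is immediate: replace $\E$ by the weighted empirical average over $(x_1,y_1),\dots,(x_m,y_m)$.

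I expect the only real subtlety — and the part needing care in the write-up — to be pinning down in exactly what sense this is a Newton update: the second-order expansion is in the scalar function value and is minimized pointwise over the constrained class $h\in\{-1,1\}$, the Hessian collapses to a constant thanks to $y^2=1$, and the step length $\alpha_t$ is recomputed by an exact line search rather than read off the quadratic surrogate. One should also be explicit about the factor-of-two convention between $H$ and the logit so that the ``logistic regression'' identification is literally correct, and note that everything here is the population version (expectations over the data-generating distribution), with the in-sample algorithm obtained verbatim by using the empirical weighted measure.
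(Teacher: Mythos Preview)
Your proposal is correct and follows essentially the same route as the paper: the second-order expansion of $\E[e^{-y(H_{t-1}+\alpha h)}]$ using $y^2h^2=1$, the pointwise minimization over $h$ yielding $h_t(x)=\mathrm{sign}\,\E_w[y\mid x]$, the exact line search giving $\alpha_t=\tfrac12\log\frac{1-\epsilon_t}{\epsilon_t}$, and the resulting weight update. You add the explicit identification of the population minimizer $H^*=\tfrac12\log\frac{\P(y=1\mid x)}{\P(y=-1\mid x)}$ to justify the ``logistic regression'' label, which the paper leaves implicit, and you are more careful in articulating why this is Newton-\emph{like} rather than a pure Newton step; both are welcome clarifications but do not change the argument.
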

\begin{proof}
Suppose we are at iteration $t$ of AdaBoost, and that we have already computed
the classifier $H_{t-1}$. We want to find the classifier $h_t$ such that:
\begin{equation}
    \begin{aligned}
        (\alpha_t, h_t) &= \argmin_{\alpha, h} \E\left(e^{-y(\alpha h(x) + H_{t-1}(x))}\right) \\
        &= \argmin_{\alpha, h} \E\left(e^{-\alpha\eta_h}e^{-yH_{t-1}(x)}\right) \\
        &\approx \argmin_{\alpha, h} \E\left(e^{-yH_{t-1}(x)}\left(1-\alpha \eta_h + \frac{\alpha^2}{2}\right)\right) \\
    \end{aligned}
\end{equation}
since $y^2h^2(x) = 1$ and where we did a Taylor expansion at the last line (recall from the notations subsection that $\eta_h = yh(x)$).
Now, to minimize both in $\alpha_t$ and $h_t$, we will first fix $\alpha_t = \alpha$ and minimize over $h_t$,
then take this minimizer $h_{t,\alpha}$ to minimize over $\alpha_t$.\newline
First, we have
\begin{equation}
    h_{t,\alpha}(x) = \argmin_h \E\left(e^{-yH_{t-1}(x)}(1-\alpha \eta_h + \frac{\alpha^2}{2})\right)
\end{equation}
which has a solution which is independent of $\alpha$:
\begin{equation}
    h_t(x) = \begin{cases}
            1 & \mbox{if } \E\left(e^{-yH_{t-1}(x)}y | x\right) > 0\\
            -1 & \mbox{otherwise.}
        \end{cases}
\end{equation}
Then, to minimize over $\alpha$, we can show that:
\begin{equation}
    \begin{aligned}
        \alpha_t &= \argmin_\alpha \E\left(e^{-yH_{t-1}(x)}e^{-\alpha\eta_{h_t}}\right) \\
        &= \frac{1}{2} \log \frac{1 - \epsilon_t}{\epsilon_t}
    \end{aligned}
\end{equation}
where $\epsilon_t = \E\left(e^{-yH_{t-1}(x)}\mathbbm{1}_{h(x) \neq y}\right)$ is the weighted error of $h_t$.
Finally, this update rule writes:
\begin{equation}
    \begin{aligned}
        H_t(x) &= H_{t-1}(x) + \frac{1}{2} \log \frac{1 - \epsilon_t}{\epsilon_t} h_t(x) \\
        W_t(x) &= W_{t-1}(x) e^{-\alpha_t \eta_{h_t}}
    \end{aligned}
\end{equation}
We can notice that this update rule is the same as the one in the original formulation of AdaBoost.
\end{proof}

The second result of~\cite{friedman2000additive} is the following. 
\begin{proposition}[Real AdaBoost as an additive model]
Real AdaBoost can be seen as an additive logistic regression
by stagewise and approximate optimization of $C(\gamma) = e^{-\gamma}$. 
\end{proposition}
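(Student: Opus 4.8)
The plan is to imitate the proof of the previous proposition, but with two changes: the base hypothesis is now allowed to be real-valued (a plausibility), so the stagewise subproblem can be solved \emph{exactly at the population level} by a pointwise minimization rather than by a Taylor/Newton step, and the word \emph{approximate} will refer to the fact that this pointwise minimizer must then be replaced by its best representative inside $\H$. I would first specialize the multiclass scheme of Alg.~\ref{alg:Original_AdaBoost_real} to the binary case $K=2$: there the plausibility $h(x,y)$ reduces to a single number $p(x)=h(x,1)\in[0,1]$, and the contribution added to the running model is the real-valued $f_t(x)=\tfrac12\log\frac{p_t(x)}{1-p_t(x)}$, so that $H_t=\sum_{s\le t} f_s$.

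Next I would run the greedy forward stepwise argument of Section~\ref{sec:AdaBoost_as_an_additive_model} for the exponential cost $C(\gamma)=e^{-\gamma}$. At stage $t$ we seek $f_t=\argmin_{f}\E\big(e^{-y(H_{t-1}(x)+f(x))}\big)$ over real-valued $f$; introducing weights $W_{t-1}(x)\propto e^{-yH_{t-1}(x)}$ exactly as before turns this into minimizing the $W_{t-1}$-weighted conditional expectation $\E_{W_{t-1}}\big(e^{-yf(x)}\mid x\big)$ pointwise in $x$. For fixed $x$ this equals $e^{-f(x)}P_{W_{t-1}}(y=1\mid x)+e^{f(x)}P_{W_{t-1}}(y=-1\mid x)$, which is strictly convex in $f(x)$; setting the derivative to zero gives $f_t(x)=\tfrac12\log\frac{P_{W_{t-1}}(y=1\mid x)}{P_{W_{t-1}}(y=-1\mid x)}$, precisely the Real AdaBoost increment, and feeding it back produces the multiplicative recursion $W_t(x)\propto W_{t-1}(x)e^{-yf_t(x)}$ of Alg.~\ref{alg:Original_AdaBoost_real}. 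Summing, $H_T=\sum_{t=1}^T f_t$ is an additive model whose assembled value estimates $\tfrac12\log\frac{\P(y=1\mid x)}{\P(y=-1\mid x)}$, hence $\P(y=1\mid x)=e^{2H_T(x)}/(1+e^{2H_T(x)})$ --- the additive logistic form recalled in the remark on additive logistic binary classification models.

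The main obstacle is not analytic but bookkeeping: one must carefully reconcile the plausibility/pseudo-loss parametrization written in Alg.~\ref{alg:Original_AdaBoost_real} with the half-log-odds parametrization used above, and check that the binary specialization is faithful, i.e. that nothing is lost in collapsing the $y\ne y_i$ sums when $K=2$. A secondary point to make precise is the meaning of ``stagewise and approximate optimization of $C$'': I would phrase it as each stage solving $\min_{f}\E\big(C(yH_{t-1}(x)+yf(x))\big)$ \emph{exactly} over all real-valued $f$ at the population level, the ``approximate'' qualifier entering only because in practice $f$ is constrained to the class $\H$ (e.g. piecewise constant on a partition) and estimated from a finite sample, so the realized increment is a best-in-$\H$ approximation of the pointwise log-odds. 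This contrasts with the Newton-step reading needed for discrete AdaBoost, where the approximation already enters through a Taylor expansion of the exponential.
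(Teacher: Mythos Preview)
Your proposal is correct and follows essentially the same route as the paper: minimize $\E\big(e^{-y(H_{t-1}(x)+f(x))}\mid x\big)$ pointwise in $x$, split into the $y=1$ and $y=-1$ pieces weighted by $e^{-yH_{t-1}(x)}$, set the derivative in $f(x)$ to zero to obtain the half-log-odds increment, and read off the multiplicative weight update. The paper's proof is terser---it does not spell out the binary specialization of Alg.~\ref{alg:Original_AdaBoost_real} or discuss the meaning of ``approximate''---but the analytic core is identical to yours.
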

\begin{proof}
    Again, we start at the iteration $t$ of (real) AdaBoost, and we have already computed the classifier $H_{t-1}$. We look
    for the classifier $h_t$ such that:
    \begin{equation}
        \begin{aligned}
            h_t &= \argmin_{\alpha, h} \E\left(e^{-y( h(x) + H_{t-1}(x))}|x\right) \\
            &= \argmin_{h} e^{-h(x)}\E\left(e^{-yH_{t-1}(x)} \mathbbm{1}_{y=1}|x\right) + e^{h(x)}\E\left(e^{-yH_{t-1}(x)} \mathbbm{1}_{y=-1}|x\right) \\
        \end{aligned}
    \end{equation}
    Setting the derivatives w.r.t.~$h_t$ to $0$, we get:
    \begin{equation}
        h_t(x) = \frac{1}{2} \log \frac{\E\left(e^{-yH_{t-1}(x)} \mathbbm{1}_{y=1}|x\right)}{\E\left(e^{-yH_{t-1}(x)} \mathbbm{1}_{y=-1}|x\right)}
    \end{equation}
    Thus, the weight update rule writes:
    \begin{equation}
        W_t(x) = W_{t-1}(x) e^{-\eta_{h_t}}
    \end{equation}
    which is the same as the one in the original formulation of real AdaBoost. 
\end{proof}
\begin{remark}
The authors from~\cite{friedman2000additive} also propose a new algorithm, based on what
we have just shown, that uses as the loss function the \textit{binomial log-likelihood} 
$C(\gamma) = - \log(1 + e^{-2\gamma})$. The algorithm is called \textit{LogitBoost}.
It has a similar form as AdaBoost because with a Taylor expansion of the loss function at $F(x) = 0$,
we have, with $\gamma = yF(x)$:
\begin{equation}
    - \log(1 + e^{-2yF(x)}) \approx e^{-yF(x)} + \log 2 - 1
\end{equation}
\end{remark}

\subsubsection{AdaBoost as an entropy projection}\label{sec:AdaBoost_as_an_entropy_projection}
This view has been proposed in 1999 by Kivinen and Warmuth~\cite{kivinen_boosting_1999}. In relies on the 
idea that the corrective updates of AdaBoost, and more generally speaking of boosting, can be seen 
as a solution to a relative entropy (constrained) minimization problem.\newline

\paragraph{Relative entropy minimization theorem}
\begin{definition}[Relative entropy]
Define the relative entropy between two distributions $p$ and $q$ as the Kullback-Leibler divergence:
\begin{equation}
    KL(p||q) = \sum_{i=1}^{m} p_i \log \frac{p_i}{q_i}
\end{equation}
\end{definition}
At every iteration $t$ of AdaBoost, we want to find a weight distribution $W_t$ that is
not correlated with the mistakes from the previous classifier $h_t$ that learnt on $W_{t-1}$. 
We can force this by introducing the constraint $W_t^T h_t(x) = 0$. Then, the updated distribution
$W_t$ should not differ too much from the previous distributions in order to take into
account the knowledge we already have from the data, and also in order not to react too much
to the possible noise. To do so, we want to minimize the relative entropy between $W_{t-1}$
and $W_t$ under the constraint $W_t^T h_t(x) = 0$. 
\begin{definition}[Relative entropy minimization problem]
    The optimization problem writes:
    \begin{equation}
        \begin{aligned}
            \min_{W\in \Delta^m} &\sum_{i=1}^{m} W(i) \log \frac{W(i)}{W_{t-1}(i)} \\
            \text{s.t. } &W^T \eta_{h_t} = 0
        \end{aligned}
    \end{equation}
\end{definition}
Kivinen and Warmuth~\cite{kivinen_boosting_1999} showed that minimizing the relative entropy
is the same problem as maximizing $-\log Z_t(\alpha)$, where 
\begin{equation}
    Z_t(\alpha) = \sum_{i=1}^{m} W_{t-1}(i) e^{-\alpha y_i h_t(x_i)}
\end{equation}
In short, we have the following result:
\begin{theorem}[Relative entropy minimization theorem]
    \begin{equation}\label{eq:3.1}
        \min_{W \in \Delta^m : W^T yh_t(x) = 0} \sum_{i=1}^{m} W(i) \log \frac{W(i)}{W_{t-1}(i)} = \max_{\alpha\in\R} -\log Z_t(\alpha)
    \end{equation}
    Furthermore, if $W_t, \alpha_t$ are the two solutions of these problems, we have that
    \begin{equation}
        W_t(i) = \frac{W_{t-1}(i) e^{-\alpha_t y_i h_t(x_i)}}{Z_t(\alpha_t)}, \quad \forall i \in \{1,\dots,m\}
    \end{equation}
\end{theorem}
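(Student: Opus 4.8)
The statement is an instance of Lagrangian convex duality: the left-hand side is a primal entropy-projection problem and the right-hand side is its dual, a one-dimensional concave maximization. \textbf{Setup.} The primal objective $W \mapsto \sum_i W(i)\log\frac{W(i)}{W_{t-1}(i)}$ is strictly convex and, since $W_{t-1}$ has full support, finite on the simplex $\Delta^m$; the feasible set $\{W\in\Delta^m : W^{T}\eta_{h_t} = 0\}$ is compact and convex. I would first observe that this set has non-empty relative interior exactly when $h_t$ is neither perfectly right nor perfectly wrong on the sample, i.e. $\epsilon_t\in(0,1)$ — which AdaBoost always presupposes — since then a strictly positive $W$ can balance the $+1$ and $-1$ entries of $\eta_{h_t}$. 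Under this Slater condition, strong duality holds and the primal infimum is attained at a unique $W_t$.

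\textbf{Dualizing.} Form the Lagrangian $L(W,\alpha,\mu) = \sum_i W(i)\log\frac{W(i)}{W_{t-1}(i)} + \alpha\, W^{T}\eta_{h_t} + \mu\big(\sum_i W(i) - 1\big)$, attaching $\alpha\in\R$ to the correlation constraint and $\mu\in\R$ to normalization, and temporarily dropping the constraints $W(i)\ge 0$. Setting $\partial L/\partial W(i)=0$ gives the Gibbs form $W(i) \propto W_{t-1}(i)\,e^{-\alpha\,\eta_{h_t}(i)}$; imposing $\sum_i W(i)=1$ pins the remaining constant and yields $W(i) = W_{t-1}(i)e^{-\alpha y_i h_t(x_i)}/Z_t(\alpha)$, which is automatically positive, so the dropped constraints are inactive. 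Substituting this minimizer back collapses $L$ to the dual function $g(\alpha) = -\log Z_t(\alpha)$. Since each $\alpha\mapsto e^{-\alpha\eta_{h_t}(i)}$ is log-convex, $Z_t$ is log-convex, hence $g$ is concave; moreover $g(\alpha)\to-\infty$ as $\alpha\to\pm\infty$ (again because $\eta_{h_t}$ has entries of both signs), so $\max_{\alpha\in\R} g(\alpha)$ is attained, say at $\alpha_t$.

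\textbf{Conclusion.} Strong duality gives the claimed equality of optimal values. To identify the primal optimum, set $W_t(i) := W_{t-1}(i)e^{-\alpha_t y_i h_t(x_i)}/Z_t(\alpha_t)$. The stationarity condition $g'(\alpha_t)=0$, equivalently $Z_t'(\alpha_t)=0$, reads $\sum_i \eta_{h_t}(i)\,W_{t-1}(i)e^{-\alpha_t\eta_{h_t}(i)} = 0$, i.e. $W_t^{T}\eta_{h_t}=0$; thus $W_t\in\Delta^m$ is primal-feasible, and since $\log\frac{W_t(i)}{W_{t-1}(i)} = -\alpha_t\eta_{h_t}(i) - \log Z_t(\alpha_t)$ and $\sum_i W_t(i)=1$, a one-line computation gives $\sum_i W_t(i)\log\frac{W_t(i)}{W_{t-1}(i)} = -\alpha_t\, W_t^{T}\eta_{h_t} - \log Z_t(\alpha_t) = -\log Z_t(\alpha_t)$, which matches the dual optimum, so by strong duality $W_t$ is the (unique) primal minimizer. \textbf{Main obstacle.} The only delicate point is the justification of strong duality — checking the Slater/feasibility hypothesis and that the positivity constraints stay slack; everything after that is routine Lagrangian calculus. (As a byproduct, solving $Z_t'(\alpha_t)=0$ explicitly recovers $\alpha_t = \tfrac12\log\frac{1-\epsilon_t}{\epsilon_t}$.)
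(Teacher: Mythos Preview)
Your proof is correct and follows essentially the same Lagrangian-duality route as the paper: both derive the Gibbs form from first-order conditions, identify the dual function as $-\log Z_t(\alpha)$, and check that the exponential update attains the optimal value. The only difference is cosmetic---the paper dualizes just the correlation constraint (keeping $W\in\Delta^m$ as a hard constraint) and proves the min--max equality by a direct weak-duality-plus-verification argument, whereas you dualize both equality constraints and invoke Slater's condition; your treatment of feasibility and of dual attainment is in fact more careful than the paper's.
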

\begin{proof}
    Let us prove those two results. Let $\mathcal{L}(W, \alpha)$ be the Lagrangian of the optimization problem:
    \begin{equation}
        \mathcal{L}(W, \alpha) = \sum_{i=1}^{m} W(i) \log \frac{W(i)}{W_{t-1}(i)} + \alpha W^T \eta_{h_t}
    \end{equation}
    
    First we show that the following minimax equation holds:
    \begin{equation}
        \min_{W \in \Delta^m} \max_{\alpha\in\R} \mathcal{L}(W, \alpha)  = \max_{\alpha\in\R} \min_{W \in \Delta^m} \mathcal{L}(W, \alpha)
    \end{equation}
    We always have that (see~\cite{fan1953minimax}, or convice yourself by the quote "the tallest of the dwarfs is shorter than the shortest of the giants"):
    \begin{equation}
        \max_{\alpha\in\R} \min_{W \in \Delta^m} \mathcal{L}(W, \alpha) \leq \min_{W\in\Delta^m} \max_{\alpha\in\R} L(W, \alpha)
    \end{equation}
    Now, on the one side we have:
    \begin{equation}
        \max_{\alpha\in\R} \min_{W \in \Delta^m} \mathcal{L}(W, \alpha) = \min_{W\in\Delta^m} L(W, \alpha^*)
    \end{equation}
    where $\alpha^*=\argmax_{\alpha\in\R} \left(\min_{W\in\Delta^m} L(W,\alpha)\right)$. On the other side, if $W^T \eta_{h_t} = 0$, 
    then $\max_{\alpha\in\R} \mathcal{L}(W, \alpha) = KL(W||W_{t-1})$. Otherwise $\max_{\alpha\in\R} \mathcal{L}(W, \alpha) = \infty$.
    Therefore,
    \begin{equation}\label{eq:orthogonal}
        \max_{\alpha\in\R} \min_{W \in \Delta^m} \mathcal{L}(W, \alpha) = \min_{W \in \Delta^m : W^T \eta_{h_t} = 0} KL(W||W_{t-1})
    \end{equation}
    Thus, using Eq.~\ref{eq:orthogonal}:
    \begin{equation}
        \min_{W\in\Delta^m} \max_{\alpha\in\R} L(W, \alpha) = L(W^*(\alpha^*), \alpha^*) \geq \min_{W\in\Delta^m} L(W, \alpha^*) = \min_{W\in\Delta^m : W^T \eta_{h_t} = 0} KL(W||W_{t-1})
    \end{equation}
    That implies the other inequality:
    \begin{equation}
        \max_{\alpha\in\R} \min_{W \in \Delta^m} \mathcal{L}(W, \alpha) \geq \min_{W\in\Delta^m} \max_{\alpha\in\R} L(W, \alpha)
    \end{equation}
    and we have equality between these two terms.\newline
    Now we show Eq.~\ref{eq:3.1}. We need to show that $\min_{W \in \Delta^m} \mathcal{L}(W, \alpha) = -\log Z_t(\alpha)$.
    As $KL(.||W_{t-1})$ is a convex differentiable function, we can obtain the minimum by setting its gradient to $0$:
    \begin{equation}
    \begin{aligned}
        &\nabla_W \mathcal{L}(W, \alpha) = \nabla_W KL(W||W_{t-1}) + \alpha \nabla_W W^T \eta_{h_t} = 0 \\
        \iff &\nabla_W KL(W||W_{t-1}) = -\alpha \nabla_W W^T \eta_{h_t} \\
        \iff &\nabla_W \sum_{i=1}^{m} W(i) \log\frac{W(i)}{W_{t-1}(i)} = -\alpha \nabla_W W^T \eta_{h_t} \\
    \end{aligned}
    \end{equation}
    Componentwise, this writes:
    \begin{equation}
        \begin{aligned}
            &\forall i \in \{1,\dots,m\}, \log W(i) - \log W_{t-1}(i) + 1 = -\alpha y_i h_t(x_i) \\
            \iff &\forall i \in \{1,\dots,m\}, W(i) \propto W_{t-1}(i) e^{-\alpha y_i h_t(x_i)} \\
        \end{aligned}
    \end{equation}
    We then need to normalize $W$ to obtain the distribution $W_t=\frac{W}{Z_t(\alpha)}$ (note that this distribution depends on $\alpha$).\newline
    We then have:
    \begin{equation}\label{eq:min_W}
        \begin{aligned}
        \min_{W \in \Delta^m} \mathcal{L}(W, \alpha) = KL(W_t||W_{t-1}) &= \sum_{i=1}^{m} W_t(i) \log \frac{W_t(i)}{W_{t-1}(i)} \\
            &= \sum_{i=1}^{m} \frac{W_{t-1}(i) e^{-\alpha y_i h_t(x_i)}}{Z_t(\alpha)} \log \frac{\frac{W_{t-1}(i) e^{-\alpha y_i h_t(x_i)}}{Z_t(\alpha)}}{W_{t-1}(i)} \\
            &= \sum_{i=1}^{m} \frac{W_{t-1}(i) e^{-\alpha y_i h_t(x_i)}}{Z_t(\alpha)} (-\alpha y_i h_t(x_i) -\log{Z_t(\alpha)}) \\
            &= -\log{Z_t(\alpha)} + \alpha \sum_{i=1}^{m} W_{t}(i) y_i h_t(x_i) \\
            &= -\log Z_t(\alpha)
        \end{aligned}
    \end{equation}
    because we have $\sum_{i=1}^{m} W_{t}(i) y_i h_t(x_i) = 0$ by Eq.~\ref{eq:orthogonal} ($W_t^T \eta_{h_t} = 0$). This completes the proof.
\end{proof}

\begin{remark}
    We can also demonstrate that the same result holds (up to the admissible space) if we update our weights according
    to the \textit{totally corrective update} which does not only correct the mistakes from
    the previous classifier, but also the mistakes from all previous classifiers. This update
    is defined optimizing the same problem but with $t$ constraints instead of $1$:
    \begin{equation}
        \begin{aligned}
            \min_{W\in\Delta^m} &\sum_{i=1}^{m} W(i) \log \frac{W(i)}{W_{t-1}(i)} \\
            \text{s.t. } &W^T \eta_{h_s} = 0, \quad \forall s \in \{1,\dots,t\}
        \end{aligned}
    \end{equation}
\end{remark}

\paragraph{The geometric insights}
Thinking the relative entropy as a distance (which we need to be careful with, because it is not truly one)
we can see this minimization problem over a linear admissible space as a projection
on the hyperplane $\mathfrak{H}_t := \{W \text{s.t. } W^T \eta_{h_t} = 0\}$ (or for the totally corrective update, on the intersection
of $t$ hyperplanes).\newline
In~\cite{kivinen_boosting_1999}, the authors affirm that as soon as a weight distribution $W_t$ can be written
in the exponential form, i.e.~
\begin{equation}
    W_t(i) = \frac{W_{t-1}(i) e^{-\alpha_t y_i h_t(x_i)}}{Z_t(\alpha_t)}, \quad \forall i \in \{1,\dots,m\}
\end{equation}
we have for any $\alpha \in \R$:
\begin{equation}
    \argmin_{W \in \Delta^m : W^T \eta_{h_t} = 0} KL(W||W_{t-1}) = \argmin_{W \in \Delta^m: W^T \eta_{h_t} = 0} KL(W||W_t(\alpha)) = W_t
\end{equation}
That means, geometrically, that for any weight distribution $W$ in the curve the curve $\alpha \mapsto W_t(\alpha)$,
the projection of $W$ on the hyperplane $\mathfrak{H}_t$ is the only point where the curve intersects
the hyperplane.\newline
The authors finish by giving a adapted formulation of Pythagora's theorem in this setup:
Letting $W_t$ be the projection on $\mathfrak{H}_t$ of $W_{t-1}$, we have:
\begin{equation}
    KL(W_{t-1}||W_t) = KL(W_{t-1}||W_t(\alpha_t)) + KL(W_t(\alpha_t)||W_t)
\end{equation}
for any $\alpha_t \in \R$. We now truly see the meaning of $W_t(\alpha)$: it is the original update that the algorithm would like to be able to do, 
but as it does not belong to the admissible space, $W_t(\alpha)$ is projected onto the admissible space and becomes $W_t$.

\subsubsection{AdaBoost as a mirror descent (successive min-max optimization problem)}\label{sec:AdaBoost_as_a_mirror_descent}
Another, more recent, view of AdaBoost is to see it as a mirror descent algorithm.
Mirror descent is a generalization of subgradient descent, which itself is a 
generalization of gradient descent. 

\paragraph{Optimization background} 
Let $f: \X \rightarrow \R$ be a convex function, and let $\X$ be a convex subset of $\R^d$.
Suppose we want to minimize $f$ over $\X$. The problem of the gradient descent is that we 
suppose $f$ to be differentiable, which is not always the case. Therefore, we can
define the subgradient of $f$ at $x \in \X$ as a generalization of the gradient
of $f$ at $x$. 
\begin{definition}[Subgradient]
We say $g$ is a subgradient of $f$ at $x$ if for all $y \in \X$,
\begin{equation}
    f(y) \geq f(x) + \langle g, y - x \rangle
\end{equation}
\end{definition}
We denote by $\partial f(x)$ the set of subgradients of $f$ at $x$. When $f$ is 
differentiable, $\partial f(x)$ is a singleton, and $\partial f(x) = \{\nabla f(x)\}$.
When $f$ is not differentiable, $\partial f(x)$ is a convex set representing the set 
of affine functions that are below $f$ at $x$.\newline
That being said, we can now define the subgradient descent algorithm, the analog of the gradient descent 
in a non-differentiable case. Let $f$ be a convex function.
Instead of updating iteratively following the direction of the gradient of $f$, we update
iteratively following the direction of a subgradient of $f$. Eventually, we can project
the resulting point on $\X$ to ensure that we stay in $\X$.
The algorithm is described in Alg.~\ref{alg:Subgradient_descent}.

\begin{alg}[Subgradient descent]\label{alg:Subgradient_descent}
\KwData{$x_0 \in \X$}
\KwResult{$x_t$ that minimizes $f$ over $\X$}
$t \leftarrow 0$ \\
\While{\textrm{\bf{Not Stopping Criterion}} and $0 \notin \partial f(x_t)$}{
  Let $g_t \in \partial f(x_t)$ \\
  Let $x_{t+1} = \Pi_\X\left(x_t - \eta_t g_t\right)$, with $\eta_t > 0$, and $\Pi_\X$ the projection on $\X$ \\
    $t \leftarrow t+1$
  }
\Return{$x_t$}
\end{alg}

Now, the mirror descent algorithm is a generalization of the subgradient descent algorithm
that can be applied to min-max optimization problems. 
Let $f: \X \rightarrow \R$ a convex function we want to minimize. Here, we need $\X$ to be a convex subset of $\R^d$.
Without losing too much generality, we can suppose that $f$ is 
$L_f$ Lipschitz continuous (typically, this holds as soon as $\X$ is compact). Our interest lies in the cases where 
$f$ writes:
\begin{equation}
    f(x) = \max_{\lambda \in \D} \phi(x, \lambda)
\end{equation}
where $\D$ is a convex compact subset of $\R^d$, 
and $\phi: \X \times \D \rightarrow \R$ is convex in $x$ and concave in $\lambda$.
The dual function of $f$ is defined as
\begin{equation}
    f^*(\lambda) = \min_{x \in \X} \phi(x, \lambda)
\end{equation}
and we may solve the dual problem
\begin{equation}
    \min_{\lambda \in \D} f^*(\lambda)
\end{equation}
to find our solution, thanks to Von Neumann's minimax theorem~\cite{fan1953minimax}. Indeed, under our 
hypothesis, we have no duality gap, i.e.
\begin{equation}
  \exists (x^*,\lambda^*) \in \X \times \D, \forall (x,\lambda) \in \X \times \D, 
  f^*(\lambda) \leq f^*(\lambda^*) = f(x^*) \leq f(x)
\end{equation}
The way the mirror descent is a generalization of the subgradient descent is the following:
consider a differentiable $1$-strongly convex function $d: \X \rightarrow \R$.\footnote{Recall
that a function $d$ is $1$-strongly convex if for all $x,y \in \X$, $d(y) \geq d(x) + \langle \nabla d(x), y - x \rangle + \frac{1}{2}\|y - x\|^2$.}
$d$ is used to define the Bregman distance between two points $x,y \in \X$ as
\begin{equation}
    D(x,y) = d(x) - d(y) - \langle \nabla d(y), x - y \rangle
\end{equation}
The idea of the mirror descent is to succesively iterate on the primal and dual variables, with the primal update being a gradient descent step
on a slightly modified function (with the Bregman distance), and the dual update being a gradient ascent step on the dual function.
The algorithm writes as in Alg.~\ref{alg:Mirror_descent}.

\begin{alg}[Mirror descent]\label{alg:Mirror_descent}
\KwData{$x_0 \in \X$}
Let $\lambda_0 = 0$ \\
\For{$t=1,\dots,T$}{
  Compute $\tilde{\lambda}_t \in \argmax_{\lambda \in \D} \phi(x_{t-1}, \lambda)$\\
  Compute $g_t = \nabla_x \phi(x_{t-1}, \tilde{\lambda}_t)$ \\
  Choose $\alpha_t > 0$ \\
  Compute $x_t \in \argmin_{x \in \X} 
  \left\{ \alpha_t \langle g_t, x \rangle + D(x, x_{t-1}) \right\}$ \\
  $\lambda_t = \frac{\sum_{s=1}^{t-1}\alpha_s \tilde{\lambda}_s}{\sum_{s=1}^{t-1}\alpha_s}$\\
  }
\Return{$x_T$}
\end{alg}

\paragraph{AdaBoost as a mirror descent}
Here, suppose the possibilities for our classifiers are contained
in a finite set of classifiers (the base of classifiers) $\H = \{h_1,\dots,h_n\}$. This is not a big assumption,
as we will see later in Sec.~\ref{sec:AdaBoost_as_a_dynamical_system} (because AdaBoost eventually converges to a cycle).
We also suppose that given a weight distribution over the data $W \in \Delta^m$, we can find
the classifier in $\H$ that minimizes the weighted error.
\begin{theorem}[AdaBoost as a mirror descent]\label{thm:AdaBoost_as_a_mirror_descent}
    AdaBoost is a mirror descent algorithm with the following parameters:
    \begin{itemize}
        \item $\X = \Delta^m$
        \item $\mathcal{D} = \Delta^n$
        \item $\phi(W, \lambda) = W^T A \lambda$, with $A_{ij} = W(i) h_j(x_i)$
    \end{itemize}
\end{theorem}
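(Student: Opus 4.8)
The plan is to instantiate the mirror‑descent template of Alg.~\ref{alg:Mirror_descent} with the negative‑entropy mirror map on the primal simplex, and then to check line by line that the resulting recursion is exactly Alg.~\ref{alg:Original_AdaBoost_discrete}. Concretely, take $d(W)=\sum_{i=1}^m W(i)\log W(i)$ on $\X=\Delta^m$; it is differentiable and $1$‑strongly convex (Pinsker), and a one‑line computation gives $D(W,W')=d(W)-d(W')-\langle\nabla d(W'),W-W'\rangle=\sum_i W(i)\log\tfrac{W(i)}{W'(i)}=KL(W\|W')$ on the simplex. The payoff $\phi(W,\lambda)=W^{\top}A\lambda=\sum_{i,j}W(i)\lambda_j\,y_i h_j(x_i)$ with $A_{ij}=y_i h_j(x_i)=\eta_{h_j}(i)$ is bilinear, hence convex in $W$ and concave in $\lambda$, so by von Neumann's minimax theorem $\min_{W\in\Delta^m}\max_{\lambda\in\Delta^n}\phi=\max_{\lambda\in\Delta^n}\min_{W\in\Delta^m}\phi$; this common value is the min–max edge, i.e.\ the maximum achievable margin, so the game really is ``the boosting game''.

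Next I would identify the inner maximisation with AdaBoost's choice of weak learner. Since $\lambda\mapsto\phi(W_{t-1},\lambda)$ is linear on $\Delta^n$, a maximiser is a vertex $\tilde\lambda_t=e_{j^\star}$, and because $\sum_i W_{t-1}(i)\eta_{h_j}(i)=\sum_{h_j(x_i)=y_i}W_{t-1}(i)-\sum_{h_j(x_i)\ne y_i}W_{t-1}(i)=1-2\epsilon_j$, maximising over $j$ is the same as minimising the weighted error $\epsilon_j$. Hence $j^\star$ indexes the classifier AdaBoost would pick, and we set $h_t:=h_{j^\star}$. The gradient then reads $g_t=\nabla_W\phi(W_{t-1},\tilde\lambda_t)$ with $i$‑th coordinate $g_t(i)=\sum_j(\tilde\lambda_t)_j\,y_i h_j(x_i)=y_i h_t(x_i)=\eta_{h_t}(i)$.

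Then the primal (mirror) step is solved in closed form: the constrained problem $W_t=\argmin_{W\in\Delta^m}\{\alpha_t\langle g_t,W\rangle+KL(W\|W_{t-1})\}$ has Lagrangian stationarity $\log W(i)-\log W_{t-1}(i)+\alpha_t g_t(i)=\text{const}$ (the nonnegativity constraints being inactive), so $W_t(i)\propto W_{t-1}(i)e^{-\alpha_t y_i h_t(x_i)}$ with normaliser exactly $Z_t(\alpha_t)=\sum_i W_{t-1}(i)e^{-\alpha_t y_i h_t(x_i)}$ — precisely AdaBoost's reweighting. Choosing the step size $\alpha_t=\tfrac12\log\tfrac{1-\epsilon_t}{\epsilon_t}$ makes this update coincide with Alg.~\ref{alg:Original_AdaBoost_discrete}, and I would remark that this $\alpha_t$ is exactly $\argmin_\alpha Z_t(\alpha)$, equivalently the value enforcing $W_t^{\top}\eta_{h_t}=0$, which is why the mirror‑descent, entropy‑projection, and original formulations all agree. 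Finally, the running dual average $\lambda_t=\big(\sum_{s<t}\alpha_s\tilde\lambda_s\big)\big/\big(\sum_{s<t}\alpha_s\big)$ places mass $\propto\alpha_s$ on $h_s$, i.e.\ it is the coefficient vector of the normalised combination $\sum_s\alpha_s h_s/\sum_s\alpha_s$, whose sign is the returned classifier $H$; so both the iterates and the output match.

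The only genuinely non‑mechanical point — the main obstacle — is the step size. Generic mirror descent prescribes $\alpha_t$ a priori (e.g.\ $\propto 1/\sqrt t$), whereas AdaBoost uses the adaptive, data‑dependent value $\tfrac12\log\tfrac{1-\epsilon_t}{\epsilon_t}$, so one must argue that this is an admissible choice within the template (an exact‑line‑search / optimal‑$Z_t$ step for which the mirror‑descent regret bound still applies) and keep careful track of the normalising constants and of the off‑by‑one in the definition of $\lambda_t$. Everything else — the Bregman identity, the vertex maximiser of a linear form over $\Delta^n$, and the exponential‑weights closed form of the Bregman projection — is routine.
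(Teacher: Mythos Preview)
Your proposal is correct and follows essentially the same route as the paper: instantiate mirror descent with the negative-entropy mirror map on $\Delta^m$, identify the inner maximisation over $\Delta^n$ with AdaBoost's weak-learner selection via the vertex argument, compute $g_t=A_{j_t}$, and solve the Bregman-proximal step in closed form to recover the exponential reweighting. You actually go a bit further than the paper by explicitly tying the dual average $\lambda_t$ to the output combination and by flagging the adaptive step-size as the one non-mechanical point; the paper simply treats $\alpha_t$ as a free choice in the template and does not discuss $\lambda_t$.
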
 
\begin{proof}
    Denote by $A_j$ the $j$-th column of $A$. We have, for any distribution $W$,
    the \textit{edge} of the classifier $h_j$ w.r.t.~$W$ defined as $W^T A_j$.
    For a given weight distribution $w$, the maximum edge over classifiers is thus
    \begin{equation}
        f(W) = \max_{j=1,\dots,n} W^T A_j = \max_{\lambda \in \Delta^n} W^T A \lambda = \max_{\lambda \in \Delta^n} \phi(W, \lambda)
    \end{equation}
    To ensure that the data is well classified by the final classifier no matter what 
    the true underlying data distribution is, we want to minimize the maximum edge over classifiers.
    We now are in the format of a min-max optimization problem, and we can apply the mirror descent algorithm.
    The dual function is given here by:
    \begin{equation}
        f^*(\lambda) = \min_{W \in \Delta^m} W^T A \lambda = \min_{W \in \Delta^m} \phi(W, \lambda)
    \end{equation}
    and the dual problem writes:
    \begin{equation}
        \max_{\lambda \in \Delta^n} f^*(\lambda)
    \end{equation}
    The main result from~\cite{freund_adaboost_2013} is that the sequence
    of weights and classifiers produced by AdaBoost is the sequence of primal 
    and dual variables produced by the mirror descent algorithm, setting 
    $d(W) := \sum_{i=1}^m W(i) \log W(i) + \log(m)$.

    Indeed, let's follow step by step what the mirror descent produces in this case:
    we fix $W_0 \in \Delta^m, \lambda_0 = 0$. Successively, in the loop for $t=1$ to $t=T$:
    \begin{equation}
        \begin{aligned}
            \tilde{\lambda}_t \in \argmax_{\lambda \in \Delta^n} \phi(W_{t-1}, \lambda) 
            &= \argmax_{\lambda \in \Delta^n} W_{t-1}^T A \lambda \\
            \iff A_{j_t} &= \argmax_{j=1,\dots,n} W_{t-1}^T A_{j_t} \\
        \end{aligned}
    \end{equation}
    where $j_t$ is the index of the classifier that minimizes the weighted error at iteration $t$.
    It is necessarily existing because the problem is linear on the simplex, so the optimum is 
    reached at one of the vertices. This step corresponds exactly to fitting the classifier
    $h_t$ to the data, i.e.~to find the classifier that minimizes the weighted error, which
    is precisely the first phase of AdaBoost.
    Then, as $A_{j_t} = \argmax_{j=1,\dots,n} W_{t-1}^T A_{j_t}
    \iff A_{j_t} \in \partial f(W_{t-1})$, we have that:
    \begin{equation}
        \begin{aligned}
            g_t &= \nabla_{W} \phi(W_{t-1}, \tilde{\lambda}_t) = A_{j_t} \\
            \iff g_t &= \nabla_{W} \max_{\lambda \in \Delta^n} \phi(W_{t-1}, \lambda) \\
            \iff g_t &= \nabla_{W} f(W_{t-1}) \\
        \end{aligned}
    \end{equation}
    The next step in the mirror descent is to choose $\alpha_t > 0$ and to compute
    \begin{equation}
        \begin{aligned}
            W_t &\in \argmin_{W \in \Delta^m} \left\{ \alpha_t \langle g_t, W \rangle + D(W, W_{t-1}) \right\} \\
            &\in \argmin_{W \in \Delta^m} \left\{ \alpha_t W^T A_{j_t} + \sum_{i=1}^m W(i) \log \frac{W(i)}{W_{t-1}(i)} + \sum_{i=1}^{m}\left(W_{t-1}(i) - W(i)\right)\right\} \\
        \end{aligned}
    \end{equation}
    The first order optimality condition on each component of $w$ implies that:
    \begin{equation}
        \begin{aligned}
            \forall i, &\frac{d}{dW(i)}\left(W(i)\left(\alpha_t y_i h_{j_t}(x_i) + \log \frac{W(i)}{W_{t-1}(i)} + W_{t-1}(i) - W(i)\right)\right) = 0 \\
            &\iff \forall i, \alpha_t y_i h_{j_t}(x_i) + \log \frac{W(i)}{W_{t-1}(i)} = 0 \\
            &\iff W(i) = W_{t-1}(i) \exp(-\alpha_t h_{j_t}(x_i)y_i)
        \end{aligned}
    \end{equation}
    which is exactly the weight update from original AdaBoost. The last normalization step is 
    straightforward.
\end{proof}
The fact that AdaBoost iterations are actually exactly the same as a mirror descent
for $d$ fixed earlier allows the authors to prove new properties and complexity bounds 
on the algorithm~\cite{freund_adaboost_2013}.

\subsection{AdaBoost as traditional machine learning methods}\label{sec:traditional_machine_learning_views}
\subsubsection{Why does AdaBoost generalize that well?}\label{sec:AdaBoost_generalizes_well}
Recently, there was a gain of interest about what are called \textit{interpolating classifiers}. It has been proven
that they generalize very well~\cite{liang2021interpolating}. In 2017, Wyner et al.~\cite{wyner2017explaining}
introduced the term of \textit{interpolating} classifiers, and they have shown that we 
can see Random Forest and AdaBoost classifiers as interpolating classifiers. This is a part of the explanation
of why AdaBoost generalizes so well. Another part of the explanation is the links between AdaBoost and
the margin theory, as explained in Sec.~\ref{sec:AdaBoost_as_a_regularized_path_to_a_maximum_margin_classifier}.

\paragraph{Interpolating classifiers}
\begin{definition}[Interpolating classifier]
A classifier $h$ is said to be an \textit{interpolating} classifier if for each sample of 
the training set, the classifier assigns the correct label, i.e.~$\forall i \in \{1,\dots,m\},
f(x_i) = y_i$. 
\end{definition}
\begin{remark}
    The term \textit{interpolating} can be explained geometrically by thinking
    of a set of points we want to interpolate with a class of smooth functions, let's say polynomial
    for instance. Most people tend to say that such classifier will overfit, achieving $100\%$ accuracy
    on the training set, and having a poor generalization error (e.g.~one nearest-neighbor).
\end{remark}
However, Random Forests and AdaBoost are example of interpolating classifier that still seem
to generalize well.

\paragraph{AdaBoost as an interpolating classifier \& the double descent}
Experimentally, in~\cite{wyner2017explaining}, it is shown that AdaBoost tends to smooth the
decision boundary the more iterations it does. In simple words, it may need $T$ iterations for 
AdaBoost to reach $100\%$ accuracy on the training set, but after $T$ iterations, the generalization
error may be very poor. However, continuing the iterations will eventually reduce the generalization error 
as the resulting classifier will act as an average of several interpolating classifiers (i.e.~classifiers 
that reach $100\%$ accuracy on the train set). In some way, this can be linked with the deep learning double descent phenomena (see~\cite{nakkiran2021deep}).
Indeed, the first descent phase consists in fitting the training data, and the second phase consists
in letting the algorithm run more iterations to reduce the generalization error.\newline

To put it in a more formal way, suppose we need $T$ iterations to reach $100\%$ accuracy on the training set.
Then, the resulting classifier $H_T=\sum_{t=1}^{T}\alpha_t h_t$ is an interpolating classifier. We suppose we ran $KT$ iterations
of AdaBoost. It is reasonable to think that all following classifiers are interpolating classifiers as well:
\begin{equation}
    H_T^k = \sum_{t=1}^{T}\alpha_{kT + t} h_{kT + t}, \quad \forall k \in \{1,\dots,K-1\}
\end{equation}
We then can see the resulting classifier $H_{KT}$ as an average of interpolating classifiers $H_T^k$:
\begin{equation}
    H_{KT} = \sum_{k=1}^{K} H_T^k = \sum_{t=1}^{KT} \alpha_t h_t
\end{equation}
Experimentally (see~\cite{wyner2017explaining}), $H_{KT}$ seems to generalize way better than any of the $H_T^k$. This makes us 
think that AdaBoost can be seen as an average of interpolating classifiers. In~\cite{wyner2017explaining},
this property is called \textit{self-averaging property of boosting}.\newline

\begin{remark}
    This behavior of boosting algorithms actually have been identified more than $20$ years ago, by 
    Schapire et al.~\cite{schapire2003boosting}. They identified that the generalization error would 
    likely decrease and AdaBoost would not overfit as it would be predicted by the VC-dimension theory. A few
    theoritical arguments have been proposed to explain this phenomena. The authors from~\cite{bartlett1998boosting} show that 
    the generalization error of the resulting classifier $H_T$ can be upper bounded. Indeed, let $d$ be the
    VC-dimension of $\H$, let $\mathcal{D}$ be the underlying distribution of the data, and let $\mathcal{S}$ be
    the training set of size $m$ (i.e.~the (discrete) distribution of the data we have observed). Then, we have:
    \begin{equation}
        \P_\mathcal{D}\left(yH_T(x) \leq 0\right) \leq \P_\mathcal{S}\left(yH_T(x) \leq \theta\right) + O\left(\sqrt{\frac{d}{m\theta^2}}\right)
    \end{equation}
    for any $\theta > 0$ with high probability. Note that this bound is independant of the number of 
    iterations $T$ of AdaBoost. It can partially explain that boosting algorithms do not overfit. 
\end{remark}
However, quantitatively, the bounds are weak and do not explain the \textit{double descent phenomena}\footnote{Note that this
term is used to echo the phenomena we all know about in deep learning from Nakkiran et al.~\cite{nakkiran2021deep}. However, this term is slightly
unadapted here. Indeed, there is no \textit{double} descent as there is no ascent. We only see a phenomena where the generalization error
continues to decrease after achieving perfect classification.}. However, an older view of AdaBoost also partially explains this phenomena, 
by showing that AdaBoost increases the $l_1$ margin of the model over the iterations (see Sec.~\ref{sec:AdaBoost_as_a_regularized_path_to_a_maximum_margin_classifier}).

\paragraph{AdaBoost as a regularized path to a maximum margin classifier}\label{sec:AdaBoost_as_a_regularized_path_to_a_maximum_margin_classifier}
This idea has been proposed in 2004 by Rosset et al.~\cite{rosset_boosting_2004}, but it bases 
itself on results that were directly underlined by Schapire et al.~\cite{bartlett1998boosting}. This
formulation comes from the combinations of two points of view: the first one is the Gradient
descent formulation explained in Sec.~\ref{sec:AdaBoost_as_a_gradient_descent_algorithm}, and the second one concentrates
on the effects of boosting on the margin $y_i H_t(x_i)$ of the classifier $H_t$. It is probably one of the most useful to
understand why AdaBoost converges and why it generalizes well.\newline
Recall the two different loss functions $c(\gamma)$ presented in Sec.~\ref{sec:AdaBoost_as_a_gradient_descent_algorithm}:
\begin{equation}
    \begin{aligned}
        \text{Exponential loss:} \quad c(\gamma) &= e^{-\gamma} \\
        \text{Binomial log-likelihood loss:} \quad c(\gamma) &= \log(1+e^{-\gamma})
    \end{aligned}
\end{equation}
Those two loss functions are actually very similar, because if $\eta_{H_T} \geq 0$, 
the two functions behave as exponential loss~\cite{rosset_boosting_2004}.\newline
In parallel, consider the additive model $H_T(x) = \sum_{t=1}^{T} \alpha_t h_t(x)$.
We can link this additive model to SVM theory by considering the margin of the classifier
w.r.t.~observation $x_i$:
\begin{equation}
    y_i H_T(x_i) = \sum_{t=1}^{T} \alpha_t y_i h_t(x_i)
\end{equation}
Suppose our classifier achieves $100\%$ accuracy on the training set, i.e.~$y_i H_T(x_i) \geq 0$ for all $i \in \{1,\dots,m\}$.
Then, we can define the margin of the classifier as the minimum margin of the observations w.r.t.~a certain distance (e.g.~$l_p$ distance):
\begin{equation}
    m_p(H_T) = \min_{i \in \{1,\dots,m\}} \frac{y_i H_T(x_i)}{||(\alpha_1,\dots,\alpha_T)||_p}
\end{equation}
Now the interesting part is that there is a link between AdaBoost and the $l_1$ margin.
In case of separable training data, AdaBoost produces a non-decreasing sequence w.r.t. the $l_1$ margin
of the model:
\begin{equation}
    \forall t, m_1(H_{t+1}) \geq m_1(H_t)
\end{equation}
What is important here is for the generalization error. Indeed, if we incease the 
$l_1$ margin of the model, as for SVM, we are likely to decrease the generalization error.\newline
This is to put in parallel with the double descent phenomena explained in Sec.~\ref{sec:AdaBoost_generalizes_well}.
Indeed, these two vision of AdaBoost are very similar. The double descent phenomena can be explained here 
by the regularization effect of AdaBoost which increases the $l_1$ margin of the model.\newline

\begin{figure}
    \centering
    \begin{tikzpicture}[scale=2,
        thick,
        >=stealth',
        dot/.style = {
          draw,
          fill = white,
          circle,
          inner sep = 0pt,
          minimum size = 4pt
        },
        cross/.style={cross out, draw=black, minimum size=2*(#1-\pgflinewidth), inner sep=0pt, outer sep=0pt}
      ]
      \coordinate (O) at (0,0);
        \draw[line width=0.5mm, blue] plot coordinates {(0,0) (1,1) (2,0) (3,1) (4,0)};
        \draw[dashdotted, red] plot[smooth] coordinates {(0,0.1) (1,1.2) (2,0.1) (3,1.2) (4,0.1)};
        \draw[dashdotted, red] plot[smooth] coordinates {(0,0.2) (1,1.4) (2,0.2) (3,1.4) (4,0.2)};
        \draw[dashdotted, red] plot[smooth] coordinates {(0,0.3) (1,1.6) (2,0.3) (3,1.6) (4,0.3)};
        \draw[dashdotted, red] plot[smooth] coordinates {(0,-0.1) (1,0.8) (2,-0.1) (3,0.8) (4,-0.1)};
        \draw[dashdotted, red] plot[smooth] coordinates {(0,-0.2) (1,0.6) (2,-0.2) (3,0.6) (4,-0.2)};
        \draw[dashdotted, red] plot[smooth] coordinates {(0,-0.3) (1,0.4) (2,-0.3) (3,0.4) (4,-0.3)};
        \draw[help lines, dashdotted, xstep=1cm, ystep=1cm] (0,-1) grid (4,2);

        \node at (0.1,-0.8)  {\Cross};
        \node at (0.9,0.5)  {\Cross};
        \node at (3.1,0.5)  {\Cross};
        \node at (2,-0.5)  {\Cross};
        \node at (3.7,-0.3)  {\Cross};

        \node at (0.1,0.9)  {\Square};
        \node at (0.9,1.4)  {\Square};
        \node at (3.1,1.4)  {\Square};
        \node at (2,0.5)  {\Square};
        \node at (3.7,0.8)  {\Square};

        \draw[->] (-0.1,0.4) arc (-30:30:-0.3);
        \draw[->] (-0.1,-0.4) arc (30:-30:-0.3);

        \draw (-0.1, 0.2) node[left] {$T\to\infty$};
        \draw (-0.1, -0.2) node[left] {$T\to\infty$};

    \end{tikzpicture}
    \caption{Illustration of the increase of the $l_1$ margin of the model over the iterations of AdaBoost. The decision boundaries of the iterations of AdaBoost are 
    shown in red. The blue line represents the decision boundary of the maximum margin classifier.
    Eventually, AdaBoost will converge to the maximum margin classifier.}
    \label{fig:maximum_margin}
    \end{figure}

\subsubsection{AdaBoost as a Kernel method - Boosting for regression}\label{sec:AdaBoost_as_a_Kernel_method}
Very recently, in 2019, Aravkin et al.~\cite{aravkin2019boosting} proposed a new view of boosting methods.
This view is more adapted for regression problems, that is why we will adapt our notations from before.
However, the authors affirm this view can be adapted to classification problems as well.

\paragraph{General algorithm}
The way to see boosting for regression is actually very close from the way we see boosting as successive optimization problems.
Indeed, given data $y_1,\dots,y_m \in \R$, we want to find a function $H: \X \rightarrow \R$ that minimizes the regression error
\begin{equation}
    H = \argmin_{H} L(H(X) - y)
\end{equation}
where $L$ is a loss function and $X={(x_1,\dots,x_m)}^T$. As often for regression problem, we want to prevent overfitting by adding
a regularization term to the loss function. 
\begin{definition}[Optimization problem in boosting for regression]
    Choosing a kernel matrix $K \in \R^{m \times m}$, we can write
    our optimization problem as
    \begin{equation}
        H = \argmin_{H} L(H(X) - y) + \gamma X^T K X =: J(X,y)
    \end{equation}
    where $\gamma > 0$ is a regularization parameter.
\end{definition}
Then, we can see boosting as stated in Algorithm~\ref{alg:AdaBoost_regression}.

\begin{alg}[Boosting for regression]\label{alg:AdaBoost_regression}
Set $h_0 = \argmin_{H} J(X,y)$. \\
\For{$t=1,\dots,T$}{
  Update $y_t = y - H_{t-1}(X)$ \\ \CommentSty{$y_t$ is the residual error at iteration $t$.
  We can see it as the new target for the next iteration. That will boost the importance of the examples we 
  did not approximate well at the previous iteration, and will reduce the importance of the examples we approximated correctly.} \\
  Compute $h_t = \argmin_{H} J(X,y_t)$ \\
  Compute $H_t = H_{t-1} + h_t$ \\
  }
\Return{$H_T$}
\end{alg}

\paragraph{Link with Kernel methods for linear regression}
What is the link with Kernel methods? Fix $L(H(X) - y) = ||y - H(X)||^2$ where $||.||$ is the Euclidean norm.
Suppose as well that $\H$ is the set of linear functions, i.e.~$\H = \{H: X \mapsto \beta X, \beta \in \R^{m\times m}\}$,
and $y \sim \mathcal{N}(0, \sigma^2 I_m)$. Let $\lambda$ the kernel scale parameter related to $K$.
Then, we can identify $h_t$ to $\beta_t$, and we can write, setting $\gamma = \frac{\sigma^2}{\lambda}$:
\begin{equation}
    \begin{aligned}
    \beta_t &= \argmin_{\beta\in\R^m} ||y_t - \beta X||^2 + \gamma X^T K X \\
    &= \lambda K \beta^T{(\lambda \beta K \beta^T + \sigma^2I_m)}^{-1} y_t \\
    \end{aligned}
\end{equation}
which has an explicit form. Setting $P_\lambda = \lambda \beta K \beta^T$, and 
$h = \beta X$, the predicted data output writes:
\begin{equation}
    \begin{aligned}
    h_t(X) &= \argmin_{h\in\R^m} ||y_t - h||^2 + \sigma^2 h^T P_\lambda^{-1}h\\
    &= P_\lambda{(P_\lambda+\sigma^2 I)}^{-1}y_t \\
    \end{aligned}
\end{equation}
\begin{definition}[Boosting kernel]
    For all $t\geq 1$, we call \textbf{boosting kernel} the quantity $P_{\lambda,t}$, defined by:
    \begin{equation}
        P_{\lambda,t} = \sigma^2 \left(I - P_\lambda{(P_\lambda+\sigma^2 I)}^{-1}\right)^{-t} - \sigma^2 I
    \end{equation}
\end{definition}
\begin{proposition}[$H_t$ is a kernel-based estimator]
One can show~\cite{aravkin2019boosting} that the updated classifier at each iteration
$H_t$ is a kernel-based estimator where the kernel is $P_{\lambda,t}$. 
\end{proposition}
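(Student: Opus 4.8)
The plan is to unroll the boosting recursion explicitly, reduce everything to powers of a single fixed ``smoother'' matrix, and then verify that the resulting estimator has exactly the kernel-ridge form $Q(Q+\sigma^2 I)^{-1}y$ dictated by the boosting kernel $Q = P_{\lambda,t}$ (this being the meaning of ``kernel-based estimator'', consistent with the formula for $h_t$ derived just above in the excerpt).

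First I would set $S := P_\lambda(P_\lambda+\sigma^2 I)^{-1}$, so that the earlier identity becomes $h_t(X) = S\,y_t$. Since $P_\lambda = \lambda\beta K\beta^T$ is symmetric positive semidefinite, $I - S = \sigma^2(P_\lambda+\sigma^2 I)^{-1}$ is symmetric positive definite, hence invertible; moreover $P_\lambda$, $S$, $I-S$ and all their (inverse) powers commute and are simultaneously diagonalizable, which is what makes the algebra below purely mechanical. In particular $(I-S)^{-t}$ is well defined, so the boosting kernel $P_{\lambda,t} = \sigma^2(I-S)^{-t} - \sigma^2 I$ makes sense.

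Next I would track the residuals. From $H_t = H_{t-1} + h_t$ and $y_t = y - H_{t-1}(X)$ one gets $H_t(X) = (I-S)H_{t-1}(X) + S y$, so the residual $r_t := y - H_t(X)$ obeys $r_t = (I-S) r_{t-1}$, hence $r_t = (I-S)^t r_0$. Fixing the convention $H_0 = 0$ gives $r_0 = y$ and therefore $H_t(X) = y - r_t = \big(I - (I-S)^{t}\big)y$; with the alternative initialization $H_0 = h_0 = \argmin_H J(X,y)$, i.e. $H_0(X) = S y$, the same computation yields $\big(I-(I-S)^{t+1}\big)y$. I would pin down one convention at the outset so that the index on $P_{\lambda,t}$ matches.

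Finally I would compute the kernel-ridge estimator attached to $P_{\lambda,t}$: since $P_{\lambda,t} + \sigma^2 I = \sigma^2(I-S)^{-t}$, one has $(P_{\lambda,t}+\sigma^2 I)^{-1} = \sigma^{-2}(I-S)^t$, hence $P_{\lambda,t}(P_{\lambda,t}+\sigma^2 I)^{-1} = I - (I-S)^t$. Comparing with the expression for $H_t(X)$ shows that $H_t$ is exactly the kernel-based estimator with kernel $P_{\lambda,t}$. The only genuinely delicate points are bookkeeping: fixing the initialization so the index matches, and checking that $I-S$ is invertible so that $P_{\lambda,t}$ and the manipulations with $(I-S)^{-t}$ are legitimate; once the commuting-matrix structure is in place, no real computation remains.
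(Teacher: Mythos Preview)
Your proposal is correct and follows essentially the same route as the paper's sketch: both unroll the boosting recursion to express $H_t(X)$ in terms of powers of $S = P_\lambda(P_\lambda+\sigma^2 I)^{-1}$ (the paper writes the geometric sum $S\sum_{i}(I-S)^i$, you write the equivalent closed form $I-(I-S)^t$ via the residual recursion), and then verify that $P_{\lambda,t}(P_{\lambda,t}+\sigma^2 I)^{-1}$ coincides with this expression. Your attention to the invertibility of $I-S$ and to the initialization/indexing convention is in fact more careful than the paper's sketch, which glosses over both points.
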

\begin{proof}
    \textit{We will here only give a sketch of the proof.}\newline
    According to the boosting scheme defined for regression: set $S_\lambda = P_\lambda{(P_\lambda+\sigma^2 I)}^{-1}$ to
    simplify the notations. Then, we have:
    \begin{equation}
        \begin{aligned}
            H_0(X) &= S_\lambda y \\
            H_1(X) &= S_\lambda y + S_\lambda (I - S_\lambda)y \\
            &\vdots \\
            H_t(X) &= S_\lambda \sum_{i=0}^{t-1} {(I - S_\lambda)}^i y \\
        \end{aligned}
    \end{equation}
    However, we also have that $S_{\lambda,t} := P_{\lambda,t}{(P_{\lambda,t}+\sigma^2 I)}^{-1}$
    simplifies to $S_\lambda \sum_{i=0}^{t-1} {(I - S_\lambda)}^i$ for all $t \geq 1$. Therefore,
    at iteration $t$, boosting returns the same estimator as the kernel-based estimator with kernel
    $P_{\lambda,t}$. 
\end{proof}
The authors provide more detailed proof and further insights of boosting as
a kernel method in~\cite{aravkin2019boosting}.

\subsubsection{AdaBoost as a Product of Experts}\label{sec:AdaBoost_as_a_product_of_experts}
This subsection is based on~\cite{edakunni2012boosting}.
\paragraph{Product of Experts models}
The idea behind Product of Experts (PoE) models is that we have access to several experts models, and
we combine their predictions to get a better prediction. This is a very general idea, the same
that motivates every ensemble method. PoE differ from other ensemble methods in the way that we 
suppose we already have access to the experts models and look for a way to combine the models,
rather than generating them in the best way. It is a more probabilistic approach to ensemble methods.\newline
According to~\cite{edakunni2012boosting}, boosting can be seen as incremental learning in PoE.
Incremental learning in PoE is an algorithm close to boosting that generates an estimator iteratively
using PoE.

\begin{alg}[Incremental learning in PoE]\label{alg:Incremental_learning_PoE}
\KwData{$(x_1, y_1),\dots, (x_m, y_m) \in \X \times \Y$}
Initialize $W_0 = (1/m,\dots,1/m)$ \\
\For{$t=1,\dots,T$}{
    Find a hypothesis $h_t$ such that $\sum_{i=1}^{m} \frac{W_j(i)}{\P(y_i|x_i,h_{j-1})} \leq 2$ (for binary classification) \\
    Update $W_t(i) = W_{t-1}(i)(1-\P(y_i|x_i,h_t))$ for $i = 1,\dots,m$ \\
    Normalize $W_t$ \\
    }
\Return $H(x) = \mathrm{sign}\left(\P(y=1|x,h_1,\dots,h_T) - \P(y=-1|x,h_1,\dots,h_T)\right)$
\end{alg}

\paragraph{Boosting as incremental learning in PoE}
\begin{theorem}[Boosting as incremental learning in PoE]
    The AdaBoost algorithm is equivalent to incremental learning in PoE.
\end{theorem}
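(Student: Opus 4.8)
The strategy is to set up an explicit dictionary between a run of discrete AdaBoost (Alg.~\ref{alg:Original_AdaBoost_discrete}) and a run of incremental learning in PoE (Alg.~\ref{alg:Incremental_learning_PoE}), under which the two algorithms generate the same weak learners $h_t$, the same weight distributions $W_t$, and the same final classifier $H$; this is how the equivalence of \cite{edakunni2012boosting} is obtained. The bridge is to read a binary weak learner $h_t$ of weighted error $\epsilon_t$ as a probabilistic expert through the logistic link
\begin{equation}
    \P(y \mid x, h_t) \;=\; \frac{1}{1 + \exp\!\big(-2\alpha_t\, y\, h_t(x)\big)},
    \qquad \alpha_t = \tfrac{1}{2} \log\frac{1-\epsilon_t}{\epsilon_t},
\end{equation}
equivalently $\log\frac{\P(y=1\mid x,h_t)}{\P(y=-1\mid x,h_t)} = 2\alpha_t h_t(x)$, so that the expert assigns probability $1-\epsilon_t$ to its own prediction. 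Conversely, an expert arising from a binary hypothesis in this form determines $\alpha_t$, hence $\epsilon_t$, uniquely, so the dictionary is a bijection at the level of one round; since $W_0$ is the uniform distribution in both algorithms, it suffices to propagate the correspondence one iteration at a time.

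First I would match the weight updates. Because $y_i h_t(x_i)\in\{-1,1\}$, on the training sample the PoE factor $1-\P(y_i\mid x_i,h_t)$ takes only the two values $\epsilon_t$ (when $h_t(x_i)=y_i$) and $1-\epsilon_t$ (when $h_t(x_i)\neq y_i$), whereas AdaBoost's factor $\exp(-\alpha_t y_i h_t(x_i))$ takes $\sqrt{\epsilon_t/(1-\epsilon_t)}$ and $\sqrt{(1-\epsilon_t)/\epsilon_t}$ on exactly those two events. The two factors differ only by the $i$-independent constant $\sqrt{\epsilon_t(1-\epsilon_t)}$, so after the normalization both algorithms perform they carry $W_{t-1}$ to the same $W_t$; an induction then gives equality of the whole sequences $(W_t)_t$. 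Moreover AdaBoost selects $h_t$ as a minimizer of the $W_{t-1}$-weighted error, and under the link above the PoE hypothesis-selection condition $\sum_i W_{t-1}(i)/\P(y_i\mid x_i,h_t)\le 2$ reduces to AdaBoost's weak-learning requirement on $h_t$, so the two algorithms are free to — and do — pick the same $h_t$ at every round.

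It remains to match the output classifiers, which is where the product-of-experts structure does the work: the combined posterior multiplies and renormalizes the experts, $\P(y\mid x,h_1,\dots,h_T)\propto\prod_{t=1}^T\P(y\mid x,h_t)$, so the PoE prediction
\begin{align}
    &\mathrm{sign}\big(\P(y{=}1\mid x,h_1,\dots,h_T) - \P(y{=}{-}1\mid x,h_1,\dots,h_T)\big) \notag \\
    &\qquad = \mathrm{sign}\sum_{t=1}^T \log\frac{\P(y{=}1\mid x,h_t)}{\P(y{=}{-}1\mid x,h_t)} \;=\; \mathrm{sign}\sum_{t=1}^T 2\alpha_t h_t(x)
\end{align}
coincides with AdaBoost's $H(x)=\mathrm{sign}\sum_t\alpha_t h_t(x)$. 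Combining the three matchings yields the equivalence. I expect the one genuinely delicate step to be pinning down the expert model precisely enough that the weight update, the hypothesis-selection condition, and the output rule all line up at once: once one commits to the logistic (equivalently, ``probability $1-\epsilon_t$ on the predicted label'') identification, the update equivalence is the elementary observation above, and the output equivalence is simply the fact that a PoE combines experts multiplicatively rather than by averaging, which is exactly what turns a product of logistic posteriors into AdaBoost's weighted linear vote.
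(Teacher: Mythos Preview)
Your proposal is correct and follows essentially the same route as the paper: both identify the expert $h_t$ with the two-valued conditional $\P(y_i\mid x_i,h_t)\in\{1-\epsilon_t,\epsilon_t\}$ (your logistic link is exactly the paper's choice $P_t=\epsilon_t$), and both then observe that the PoE multiplicative update $W_{t-1}(i)(1-\P(y_i\mid x_i,h_t))$ agrees with AdaBoost's exponential update after normalization. Two minor differences worth noting: the paper \emph{derives} the choice $P_t=\epsilon_t$ by computing $\sum_i W_{t-1}(i)/\P(y_i\mid x_i,h_t)=(1-\epsilon_t)/(1-P_t)+\epsilon_t/P_t$ and showing the PoE constraint $\le 2$ forces $\epsilon_t\le P_t\le 1/2$, whereas you posit the link and then (slightly loosely) assert the constraint ``reduces to weak learning''\,---\,in fact with your link the sum is exactly $2$ for every $h_t$, so the constraint is vacuous and the matching of $h_t$ rests purely on the shared freedom to pick the same minimizer; conversely, you go further than the paper by also verifying that the PoE output rule coincides with AdaBoost's weighted vote, which the paper's proof omits.
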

\begin{proof}
    Consider the estimator $h_t$ at iteration $t$ of AdaBoost for a binary classification problem. 
    We can write the probability of classifying correctly an example $x_i$ at iteration $t$ as
    \begin{equation}
        \begin{aligned}
        \P(y_i=y|x_i,h_t) = &\P(y_i=y|h_t(x_i)=y)\P(h_t(x_i)=y|x_i) \\
                                + &\P(y_i=y|h_t(x_i)=-y)\P(h_t(x_i)=-y|x_i)
        \end{aligned}
    \end{equation}
    Suppose that the error from classifier $h_t$ is symmetric, i.e.
    \begin{equation}
        \P(y_i=y|h_t(x_i)=-y) = \P(y_i=-y|h_t(x_i)=y) = P_t
    \end{equation}
    Then, we have
    \begin{equation}
        \P(y_i=y|x_i,h_t) = (1 - \P(h_t(x_i) \ne y_i|x_i))\left(1 - P_t\right) + \P(h_t(x_i) \ne y_i|x_i)P_t
    \end{equation}
    Now, on the one hand, we have
    \begin{equation}
        \begin{aligned}
            \sum_{i=1}^{m} \frac{W_{j-1}(i)}{\P(y_i|x_i,h_t)} &= \sum_{i=1}^{m} \frac{W_{t-1}(i)}{(1 - \P(h_t(x_i) \ne y_i|x_i))\left(1 - P_t\right) + \P(h_t(x_i) \ne y_i|x_i)P_t} \\
            &= \sum_{h_t(x_i) = y_i} \frac{W_{t-1}(i)}{\left(1 - P_t\right)} +  \sum_{h_t(x_i) \ne y_i} \frac{W_{t-1}(i)}{P_t}\\
        \end{aligned}
    \end{equation}
    by supposing our classifier satisfies $\P(h_t(x_i) = 1|x_i) \in \{0,1\}$, which means that it produces
    decisions which have no random component (which is almost always the case for AdaBoost, as we use decision trees). 
    On the other hand, we must impose the condition $\sum_{i=1}^{m} \frac{W_{t-1}(i)}{\P(y_i|x_i,h_t)} \leq 2$, which implies:
    \begin{equation}
        \epsilon_t := \sum_{h_t(x_i) \ne y_i} W_{t-1}(i) \leq P_t \leq \frac{1}{2}
    \end{equation}
    This hypothesis is reasonable since we hope our classifier to perform better than random guessing.
    Setting $P_t = \epsilon_t$, we have
    \begin{equation}
        P_t = \frac{e^{-\alpha_t}}{e^{-\alpha_t}+e^{\alpha_t}}
    \end{equation}
    We finally need to update the weights, which is made in Alg.~\ref{alg:Incremental_learning_PoE} by
    \begin{equation}
        W_t(i) = W_{t-1}(i)(1-\P(y_i|x_i,h_t))
    \end{equation}
    and in our case, 
    \begin{equation}
        \begin{aligned}
        1-\P(y_i|x_i,h_t) &= \begin{cases}
                P_t & \mbox{if } h_t(x_i) = y_i \\
                1 - P_t & \mbox{otherwise.}
            \end{cases} \\
        &= \frac{e^{-\alpha_t y_i h_t(x_i)}}{e^{-\alpha_t}+e^{\alpha_t}} \\
        \end{aligned}
    \end{equation}
    Therefore we recover the weight update from AdaBoost.
\end{proof}
We can now write the AdaBoost algorithm as incremental learning in PoE (see Alg.~\ref{alg:AdaBoost_PoE}).

\begin{alg}[AdaBoost as incremental learning in PoE]\label{alg:AdaBoost_PoE}
\KwData{$(x_1, y_1),\dots, (x_m, y_m) \in \X \times \{-1, 1\}$}
Initialize $W_0 = (1/m,\dots,1/m)$ \\
\For{$t=1,\dots,T$}{
    Set $\epsilon_t = \sum_{h_t(x_i) \ne y_i} W_{t-1}(i)$ \\
    Find a hypothesis $h_t$ such that $\epsilon_t \leq \frac{1}{2}$ that minimizes $\epsilon_t$ \\ 
    Set $\alpha_t = \frac{1}{2} \log \frac{1 - \epsilon_t}{\epsilon_t}$ \\
    $\P(y_i|x_i,h_t) = 1 - \frac{e^{-\alpha_t y_i h_t(x_i)}}{e^{-\alpha_t}+e^{\alpha_t}}$ \\
    Update $W_t(i) = W_{t-1}(i)(1-\P(y_i|x_i,h_t))$ for $i = 1,\dots,m$ \\
    Normalize $W_t$ \\
    }
\Return $H(x) = \mathrm{sign}\left(\sum_{t=1}^{T}\alpha_t h_t(x)\right)$
\end{alg}
In practice, in~\cite{edakunni2012boosting}, it is shown that Alg.~\ref{alg:AdaBoost_PoE} 
can be derived to obtain significantly better results than the original AdaBoost algorithm. 

\subsubsection{AdaBoost as a dynamical system: experiments and theoritical insights}\label{sec:AdaBoost_as_a_dynamical_system}

\paragraph{Diversity and cycling behavior}\label{subsec:Diversity}
Some may see boosting algorithm as a way to increase diversity of a set
of weak learners to produce a better additive model. We will discuss here how
diversity intervene in boosting algorithms, how it can be measured but 
also how it is limited. Indeed, by seeking for an estimator which must be 
very diverse from the previous one, AdaBoost often falls into a cycling behavior
which is not the purpose for which it was designed.\newline

\paragraph{What is diversity, and how do we measure it?}
The key for boosting algorithms to work is to be able to combine some different
weak learners. Indeed, if the set of weak learners in which we choose our $h_t$
is not diverse enough, we may not be able to increase too much the accuracy from
a single weak learner to an additive model of weak learners.\newline
Thus, it is important that the weak learners iteratively chosen by AdaBoost are 
different enough from each other. This is what we call \textit{diversity}.\newline
There are several ways to measure diversity, as there are several ways to measure the
efficiency of an estimator~\cite{grandini2020metrics} (precision, recall, F1-score, etc.). 
Let's fix a definition here. We will be using the definition of diversity from~\cite{li2012diversity}:\newline
\begin{definition}[Diversity of a set of weak learners] \label{def:diversity}
    Given a set of weak learners $\H = \{h_1,\dots,h_T\}$, we define the diversity
    of $\H$ as
    \begin{equation}
        \mathrm{div}(\H) = 1 - \frac{2}{T(T+1)} \sum_{1\leq t\ne s\leq T} \mathrm{sim}(h_t, h_s)
    \end{equation}
    where $\mathrm{sim}(h_t, h_s)$ is a measure of the similarity between $h_t$ and $h_s$ defined as
    \begin{equation}
        \mathrm{sim}(h_t, h_s) = \frac{1}{m} \sum_{i=1}^{m} h_t(x_i) h_s(x_i)
    \end{equation}
\end{definition}
We can see that $\mathrm{sim}(h_t, h_s) \in [-1,1]$, and that $\mathrm{sim}(h_t, h_s) = 1$ if and only if
$h_t = h_s$, and $\mathrm{sim}(h_t, h_s) = -1$ if and only if $h_t = -h_s$.\newline
Therefore, the diversity of $\H$ increases if we add very different weak learners to $\H$
than the ones already in $\H$, and decreases if we add very similar weak learners to $\H$.\newline

Thus, for AdaBoost as for any other ensemble method, the key for the algorithm to work is to have a
diverse enough set of weak learners (e.g.~decision trees). 

However, as we will illustrate in the next subsubsection, even with a diverse enough set of estimators,
AdaBoost iterations do not necessarily produces a subset of diverse estimators.

\paragraph{Diversity is not always the key: \textit{Does AdaBoost always cycle?}}
In~\cite{rudin2004dynamics}, the authors show that for some problems, AdaBoost iterations
become cyclic. This means that the new weak learners that we are adding to our ensemble model
decrease diversity of the set (in the sense of Def.~\ref{def:diversity}).\newline
To illustrate this, we can consider the following example. Let $\X = [0,1] \times [0,1]$ and
$\Y = \{-1,1\}$. For each $x$, we assign a label $y$ as follows:
\begin{equation}\label{eq:toy_problem}
    y(x) = \left\{
        \begin{array}{ll}
            1 & \mbox{if } x_1 \leq \frac{1}{4} \text{ or } x_2 \leq \frac{1}{4} \text{ or } x_2 \geq \frac{3}{4} \\
            -1 & \mbox{otherwise.}
        \end{array}
    \right\}
\end{equation}
Consider $\H$ the set of all decision stumps on $\X$. $\H$ is a diverse set of weak learners.
However, if we run AdaBoost on this problem, we will see that the algorithm will produce a cyclic sequence
very quickly. Infact, we will likely get a sequence of $3$ different decision stumps that are repeated over and over again.
\begin{figure}[H]
    \centering
    \includegraphics[width=\textwidth]{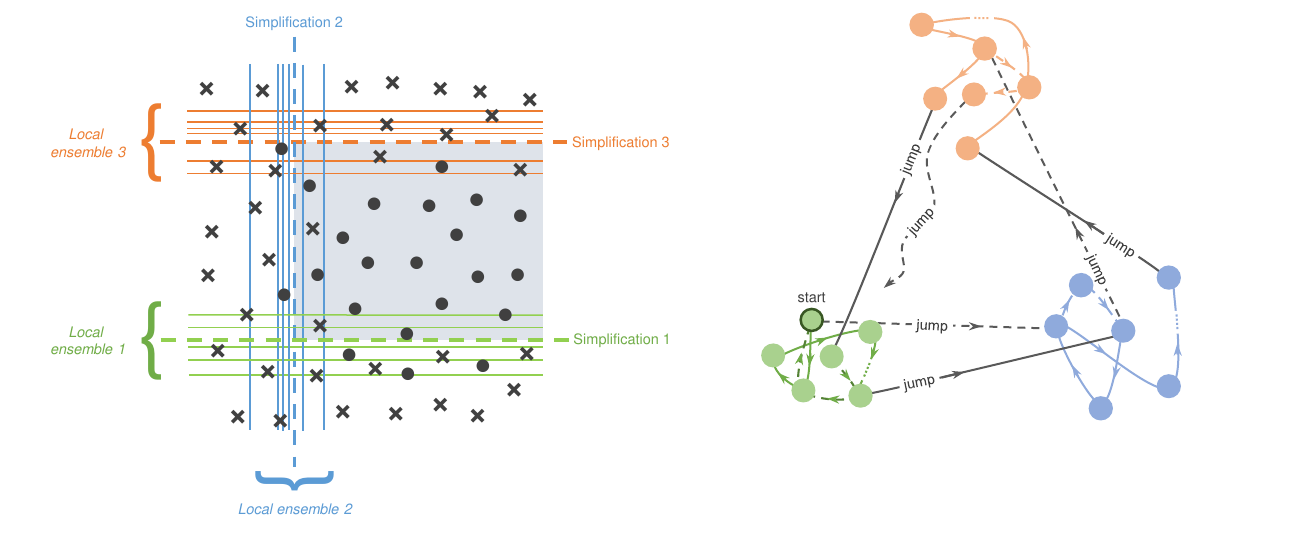}
    \caption{Example of cyclic AdaBoost iterations. This represents the toy problem defined in
    Eq.~\ref{eq:toy_problem}. The image row shows the decision stumps chosen by AdaBoost,
    the second image illustrate them in the space of decision stumps where the distance between the
    classifiers can be seen as the opposite of the similarity defined in Sec.~\ref{subsec:Diversity}.}
    \label{fig:AdaBoost_diversity}
\end{figure}

This is illustrated in Fig.~\ref{fig:AdaBoost_diversity}. More difficult problems don't necessarily
enlighten this phenomena this quickly, but we can still observe it in higher dimensions real-world problems.\newline
The behavior that we tend to see is that in the first place, AdaBoost tries to cover a \textit{base space} of 
estimators, and then, it seems to try to find the best combination of the estimators in this base space
by repeating the same estimators over and over again but not necessarily as often as the others.\newline

This intution has been formalized theoritically in the open problem presented in~\cite{rudin2012open} in 2012, and this
conjecture has very recently been proven in~\cite{belanich2012convergence} in 2023. We will present the main ideas in
the next subsubsection.

\paragraph{Theoritical insights}\label{subsec:theoritical_insights}
This subsubsection is based on~\cite{belanich2012convergence}. This article proposes an original
way to see AdaBoost. It is very formal, and we will try to keep most of their notations. Also,
we volontary simplify the work they have done, which is way more riguourous and precise than this subsection.
This subsection only stands to give intuitions of the results they have proven.\newline
A dynamical system can be defined as follows:
\begin{definition}[Dynamical system]
    A dynamical system is a tuple $(\X,\Sigma,\mu,f)$ where:
    \begin{itemize}
        \item $\X$ is a compact metric space
        \item $\Sigma$ is a $\sigma$-algebra on $\X$
        \item $\mu$ is a probability measure on $\X$
        \item $f$ is a measurable function from $\X$ to $\X$ which describes the evolution of the system
    \end{itemize}
\end{definition}
We will consider AdaBoost as a dynamical system over the weights of the data at each iteration.
Thus, in our case, the dynamical system we consider is $(\Delta_m, \Sigma_m, \mu, \A)$ where:
\begin{itemize}
    \item $\Delta_m$ is the $m$-dimensional simplex, i.e.~$\Delta_m = \{p \in \mathbb{R}^m | \sum_{i=1}^{m} p_i = 1, p_i \geq 0\}$
    \item $\Sigma_m$ is the Borel $\sigma$-algebra on $\Delta_m$
    \item $\mu$ is a measure we will define later
    \item $\A$ is the AdaBoost update over the weights
\end{itemize}
The main goal of the paper is to be able to apply the \textit{Birkhoff Ergodic Theorem}~\ref{thm:Birkhoff}~\cite{birkhoff1931proof} to the previous dynamical system.
\begin{theorem}[Birkhoff Ergodic Theorem] \label{thm:Birkhoff}
    Let $(\X,\Sigma,\mu,\A)$ be a dynamical system and $f$ be a measurable function from $\X$ to $\mathbb{R}$.
    Then, there exists a measurable function $f^*$ such that:
    \begin{equation}
        \lim_{n \to \infty} \frac{1}{T} \sum_{i=1}^{T} f(\A^i(x)) = f^*(x)
    \end{equation}
    for $\mu$-almost every $x \in \X$. Also, we have that $f^*$ is $\A$-invariant, i.e.~$f^* \circ \A = f^*$.
\end{theorem}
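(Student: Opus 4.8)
The plan is to follow Garsia's proof of the pointwise ergodic theorem, which reduces the whole statement to a single maximal inequality. Throughout I assume — as is tacitly required for the statement to hold — that $f\in L^1(\mu)$ and that $\A$ preserves $\mu$, i.e.\ $\mu(\A^{-1}B)=\mu(B)$ for every $B\in\Sigma$ (this last hypothesis being implicit in calling $(\X,\Sigma,\mu,\A)$ a dynamical system here). Write $S_nf=\sum_{i=0}^{n-1}f\circ\A^i$ with $S_0f=0$, and $A_nf=\tfrac1n S_nf$; the starting index of the sum is immaterial since changing it perturbs $A_nf$ by $O(1/n)$. Set $\overline f=\limsup_{n}A_nf$ and $\underline f=\liminf_{n}A_nf$. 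From the identity $A_nf\circ\A=\tfrac1n(S_{n+1}f-f)$ one checks that $\overline f$ and $\underline f$ are $\A$-invariant on the full-measure set where they are finite. The entire theorem then reduces to proving $\overline f=\underline f$ $\mu$-a.e.: the common value $f^*$ is then the a.e.\ limit of $A_nf$, it is $\A$-invariant, and $\int f^*\,d\mu=\int f\,d\mu$ follows afterwards from Fatou together with the maximal bound below.

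First I would establish the \emph{maximal ergodic lemma}: for $g\in L^1(\mu)$ and $N\ge1$, writing $M_N^+g=\max_{0\le n\le N}S_ng\ge0$, one has $\int_{\{M_N^+g>0\}}g\,d\mu\ge0$. The proof is a short telescoping argument. On $\{M_N^+g>0\}$ the maximum is attained at some index $n\ge1$, and since $S_ng=g+(S_{n-1}g)\circ\A\le g+(M_N^+g)\circ\A$, we get $M_N^+g\le g+(M_N^+g)\circ\A$ there. Integrating this over $\{M_N^+g>0\}$, extending the nonnegative term $(M_N^+g)\circ\A$ to all of $\X$, and using $\int_{\X}(M_N^+g)\circ\A\,d\mu=\int_{\X}M_N^+g\,d\mu$ (measure-preservation) together with $\int_{\{M_N^+g>0\}}M_N^+g\,d\mu=\int_{\X}M_N^+g\,d\mu$ yields, after cancelling the finite quantity $\int_{\X}M_N^+g\,d\mu$, the asserted inequality.

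Next I would deduce the theorem. Fix rationals $\alpha<\beta$ and put $E=E_{\alpha,\beta}=\{\underline f<\alpha\}\cap\{\overline f>\beta\}$; this set is $\A$-invariant, so $S_n\big((f-\beta)\mathbbm{1}_E\big)=\mathbbm{1}_E\,(S_nf-n\beta)$. Applying the maximal lemma to $g=(f-\beta)\mathbbm{1}_E$ and observing that $\{M_N^+g>0\}\uparrow E$ as $N\to\infty$ (because on $E$ one has $\overline f>\beta$, hence $A_nf>\beta$ for infinitely many $n$), dominated convergence gives $\int_E(f-\beta)\,d\mu\ge0$, i.e.\ $\int_Ef\,d\mu\ge\beta\,\mu(E)$. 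Running the same argument with $-f$ in place of $f$ and $-\alpha$ in place of $\beta$ (valid since $\limsup_nA_n(-f)=-\underline f>-\alpha$ on $E$) gives $\int_Ef\,d\mu\le\alpha\,\mu(E)$. Hence $(\beta-\alpha)\mu(E)\le0$, forcing $\mu(E)=0$; taking the union over all rational pairs $\alpha<\beta$ shows $\overline f=\underline f$ $\mu$-a.e., which is exactly the claimed a.e.\ convergence of $A_nf$, while $\A$-invariance of $f^*$ has already been noted.

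The main obstacle is the maximal ergodic lemma itself — it is the one genuinely nontrivial step, and care is needed both with the set $\{M_N^+g>0\}$ (where the maximum is attained at an index $\ge1$, so that the telescoping $S_ng=g+(S_{n-1}g)\circ\A$ applies) and with the use of $\A$-invariance of $\mu$ when replacing $\int(M_N^+g)\circ\A$ by $\int M_N^+g$. The remaining technical points are routine: a.e.\ finiteness of $\overline f$ and $\underline f$ (which follows from $f\in L^1$ since $\tfrac1n f\circ\A^n\to0$ $\mu$-a.e., a Borel--Cantelli consequence of $\sum_n\mu(|f|>n\varepsilon)<\infty$), and the domination needed to pass to the limit $N\to\infty$ in the deduction — both of which only require the integrability hypothesis on $f$ to be made explicit.
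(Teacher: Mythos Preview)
Your proof is the standard Garsia argument and is correct as written: the maximal ergodic lemma is handled cleanly, the telescoping on the invariant sets $E_{\alpha,\beta}$ is the usual one, and the caveats you flag (the implicit hypotheses $f\in L^1(\mu)$ and $\mu\circ\A^{-1}=\mu$) are exactly the ones missing from the paper's statement.

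However, there is nothing to compare it to: the paper does not prove Theorem~\ref{thm:Birkhoff}. It is quoted as a classical result with a reference to Birkhoff's original 1931 paper and then invoked as a black box in the discussion of AdaBoost's ergodic dynamics. So your proposal goes well beyond what the paper does --- the paper's ``proof'' is simply a citation.
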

Applying this theorem to our dynamical system would allow us to prove the convergence of AdaBoost (w.r.t.~the weights), and to prove that there exists a fixed
point of the system.\newline
To apply such theorem, we need to ensure that there exists a measure $\mu$ such that $\A$ is $\mu$-preserving. This is what
the \textit{Krylov-Bogoliubov Theorem}~\ref{thm:Krylov-Bogoliubov} states.
\begin{theorem}[Krylov-Bogoliubov Theorem] \label{thm:Krylov-Bogoliubov}
    If $(W,N)$ is a metric compact space and $g: W \to W$ is a continuous function, then there exists a Borel probability measure $\mu$ on $W$
    such that $g$ is $\mu$-preserving.
\end{theorem}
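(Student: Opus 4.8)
The plan is to run the classical Krylov--Bogoliubov averaging argument. First I would fix an arbitrary point $w_0 \in W$ and form, for each $n \geq 1$, the empirical (Cesàro) measure
\begin{equation}
    \mu_n = \frac{1}{n} \sum_{k=0}^{n-1} \delta_{g^k(w_0)} \in \mathcal{P}(W),
\end{equation}
where $g^k$ is the $k$-th iterate of $g$, $\delta_w$ is the Dirac mass at $w$, and $\mathcal{P}(W)$ denotes the set of Borel probability measures on $W$. These are the candidates for an invariant measure: the point is that any weak-$*$ accumulation point of $(\mu_n)_n$ will turn out to be $g$-invariant.

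Second, I would invoke compactness of $\mathcal{P}(W)$. Since $W$ is compact metric, $C(W)$ with the sup norm is a separable Banach space, so by Banach--Alaoglu the closed unit ball of $C(W)^*$ is weak-$*$ compact and metrizable; via the Riesz representation theorem one identifies $\mathcal{P}(W)$ with a subset of that ball, and this subset is weak-$*$ closed because positivity and the normalization $\mu(W) = \int_W 1 \, d\mu = 1$ both pass to weak-$*$ limits (using $1 \in C(W)$). Hence there is a subsequence $(\mu_{n_j})_j$ converging weak-$*$ to some $\mu \in \mathcal{P}(W)$.

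Third, I would verify that $g$ preserves $\mu$. Fix $f \in C(W)$; since $g$ is continuous, $f \circ g \in C(W)$ as well. A telescoping sum gives
\begin{equation}
    \left| \int_W f \circ g\, d\mu_n - \int_W f\, d\mu_n \right| = \frac{1}{n}\left| f(g^n(w_0)) - f(w_0) \right| \leq \frac{2\|f\|_\infty}{n},
\end{equation}
which tends to $0$ as $n \to \infty$. Passing to the limit along $(n_j)$ and applying weak-$*$ convergence to the two continuous functions $f$ and $f \circ g$, we get $\int_W f \circ g\, d\mu = \int_W f\, d\mu$ for every $f \in C(W)$. Because continuous functions separate finite Borel measures on a compact metric space, this yields $g_* \mu = \mu$, i.e.\ $g$ is $\mu$-preserving, which is what we wanted.

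The main obstacle is the second step: producing a convergent subsequence \emph{inside} $\mathcal{P}(W)$, i.e.\ ensuring the limit is again a probability measure and not a positive measure of mass strictly less than $1$. This is exactly where compactness of $W$ enters — on a non-compact space mass can escape to infinity and one would instead need a tightness hypothesis (Prokhorov's theorem). Once the weak-$*$ compactness of $\mathcal{P}(W)$ is in hand, the invariance computation in the third step is a one-line estimate, and specializing $W = \Delta_m$, $g = \A$ gives the measure $\mu$ needed to apply the Birkhoff Ergodic Theorem to AdaBoost.
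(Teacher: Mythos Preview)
Your argument is the standard Krylov--Bogoliubov averaging proof and is correct as written: the Ces\`aro means of Dirac masses along an orbit, weak-$*$ sequential compactness of $\mathcal{P}(W)$ via Banach--Alaoglu and Riesz, and the telescoping estimate showing invariance are all fine, and your remark about why compactness of $W$ is essential (no mass escaping) is to the point.

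There is, however, nothing to compare against: the paper does not supply a proof of this theorem. It merely states Krylov--Bogoliubov as a classical tool and invokes it to guarantee the existence of an $\A$-invariant measure before applying Birkhoff's ergodic theorem to the AdaBoost dynamical system. So your proposal is not an alternative to the paper's proof but rather fills in a result the paper takes for granted. One small caveat for the intended application: the paper needs invariance on a set where $\A$ is continuous, and $\A$ is only shown to be continuous on $\cup_\eta \pi^\circ(\eta)$, not on all of $\Delta_m$; the actual compact space on which Krylov--Bogoliubov is applied is the limit set $\Omega_\infty^+$, so your final sentence specializing to $W=\Delta_m$ is slightly off --- the relevant $W$ is $\Omega_\infty^+$.
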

We now see that in order to apply the Birkhoff Ergodic Theorem, we only have to apply the Krylov-Bogoliubov Theorem to our (well-chosen) dynamical system.
What we first can observe is that it is only relevant to set our first weight $w_1$ in the interior subset of the simplex 
$\Delta_m^\circ = \{w \in \mathbb{R}^m | \sum_{i=1}^{m} w_i = 1, w_i > 0\}$. Indeed, if we set $w_1$ to be $0$ for some $i$, then the weight will stay $0$ for all the iterations.\newline
Let $\eta$ a dichotomy of the data, i.e.~$\eta = (y_1 h(x_1),\dots,y_m h(x_m)) \in {\{-1,1\}}^m$ which is positive at the $i$th compenent only if $h$ classifies correctly sample $x_i$. 
We can define a set of weights $\pi(\eta) := \{w\in \Delta_m, \eta \in \argmin_{\eta'} \eta'^T w\}$ the set of weights such that the estimator $h$ that minimizes the 
weighted error $w^T \eta_h$ verifies $\eta_h = \eta$.\newline
Define also $\pi^+(\eta) = \{w \in \pi(\eta), \eta^T w > 0\}$ the set of weights in $\pi(\eta)$ that are make non-zero error on mistake dichotomy $\eta$. Then,
we can define $\Delta_m^+ = \cup_{\eta \in {\{-1,1\}}^m} \pi^+(\eta)$ the set of weights that make non-zero error on at least one mistake dichotomy.\newline
The first big result that is established in the paper is the following:
\begin{proposition}[Continuity of AdaBoost Update]
    The AdaBoost update $\A$ is continuous over the set $\cup_{\eta \in {\{-1,1\}}^m} \pi^\circ(\eta)$. 
\end{proposition}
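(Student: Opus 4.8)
The strategy is to exploit the piecewise description of the update. On a single cell $\pi^\circ(\eta)$ the weak learner produced by AdaBoost does not change, so there $\A$ collapses to an explicit exponential reweighting that depends continuously on $w$; one then patches these continuous pieces together, because the cells $\pi^\circ(\eta)$ form an open cover of the claimed domain $\bigcup_{\eta}\pi^\circ(\eta)$.

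First I would make precise that the selected dichotomy is locally constant. Let $E=\{\eta_h:h\in\H\}\subseteq\{-1,1\}^m$ be the finite set of realizable dichotomies. For $\eta\neq\eta'$ in $E$ the vector $\eta-\eta'$ is nonzero and $\pm$-valued, hence non-constant, hence not orthogonal to the direction space of the affine hull of $\Delta_m$; so $\{w\in\Delta_m:\eta^Tw=\eta'^Tw\}$ is a proper affine slice of $\Delta_m$, and the (relative) interior of $\pi(\eta)$ is
\[
\pi^\circ(\eta)=\{\,w\in\Delta_m:\ w_i>0\ \ \forall i,\ \ \eta^Tw<\eta'^Tw\ \ \forall\,\eta'\in E\setminus\{\eta\}\,\}.
\]
In particular $\pi^\circ(\eta)$ is relatively open in $\Delta_m$, lies in the open simplex, and on it $\eta$ is the \emph{unique} minimizer of $\eta'\mapsto\eta'^Tw$; hence the classifier chosen by AdaBoost (and so its dichotomy) is the same at every $w\in\pi^\circ(\eta)$, independently of any tie-breaking convention, and two classifiers realizing the same $\eta$ carry equal weighted error and induce the same reweighting.

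Next I would check continuity on one cell and then glue. On $\pi^\circ(\eta)$ the update equals the map $R_\eta$ given (with the signs and normalization of Alg.~\ref{alg:Original_AdaBoost_discrete}) by $R_\eta(w)_i=w_i\,e^{-\alpha(w)\eta_i}/Z(w)$, where $\epsilon(w)$, the weighted error of the selected classifier, is a fixed affine (hence continuous) function of $w$, $\alpha(w)=\tfrac12\log\frac{1-\epsilon(w)}{\epsilon(w)}$, and $Z(w)=\sum_j w_j e^{-\alpha(w)\eta_j}$. Since $\epsilon(w)\in(0,1)$ on $\pi^\circ(\eta)$ (see below), $\alpha$ is finite and continuous; each $w_i e^{-\alpha(w)\eta_i}$ is continuous; and $Z(w)>0$ because every $w_i>0$ and every exponential is positive, so $R_\eta\colon\pi^\circ(\eta)\to\Delta_m$ is continuous. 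The sets $\{\pi^\circ(\eta)\}_{\eta\in E}$ are pairwise disjoint and each relatively open in $\Delta_m$, and their union is $X:=\bigcup_\eta\pi^\circ(\eta)$; they thus form an open cover of $X$ on each member of which $\A$ coincides with the continuous map $R_\eta$, and a map continuous on each member of an open cover of its domain is continuous, so $\A$ is continuous on $X$.

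The content here is conceptual rather than computational: globally on $\Delta_m$ the update is genuinely discontinuous, since crossing from $\pi(\eta)$ into $\pi(\eta')$ switches the weak learner and with it the direction of reweighting, and the role of the proposition is to identify $\bigcup_\eta\pi^\circ(\eta)$ as exactly the locus where no such switch can occur — after which the two verifications above are routine. The only point I expect to require care, and where the standing assumptions on $\H$ enter, is the claim $\epsilon(w)\in(0,1)$ throughout each $\pi^\circ(\eta)$, i.e.\ that $\A$ is even well defined there: $\epsilon(w)=0$ forces $\alpha(w)=+\infty$ and happens precisely on $\pi^\circ(\eta)$ for the all-correct dichotomy (a single classifier fitting all the data), which must be excluded or handled by the convention $\A=\mathrm{id}$ there, whereas $\epsilon(w)<1$ follows, e.g., from closure of $\H$ under negation or from the weak-learning hypothesis $\epsilon_t<\tfrac12$.
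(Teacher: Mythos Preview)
Your argument is correct and is essentially the paper's own proof, carried out with more care: both reduce to the observation that on each cell $\pi^\circ(\eta)$ the selected dichotomy is constant, so the update is given by a fixed explicit formula that is manifestly continuous in $w$. The paper simply writes this closed form and passes to the limit along a sequence $w^s\to w$, whereas you additionally spell out why the cells are relatively open (so the gluing via an open-cover argument is legitimate) and flag the $\epsilon(w)\in(0,1)$ well-definedness caveat, which the paper leaves implicit.
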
 
This is a key result to apply the Krylov-Bogoliubov Theorem. 
\begin{proof}
Indeed, setting $w^s \to w$ a sequence of weights in $\cup_{\eta \in {\{-1,1\}}^m} \pi^\circ(\eta)$, we have that:
\begin{equation}
    \begin{aligned}
        \A(w^s)(i) &= \frac{1}{2} w^s(i) \left(\frac{1}{\eta w^s}\right)^{\eta(i)} \left(\frac{1}{1 - \eta w^s}\right)^{1-\eta(i)} \\
        &\underset{s\to\infty}{\to} \frac{1}{2} w(i) \left(\frac{1}{\eta w}\right)^{\eta(i)} \left(\frac{1}{1 - \eta w}\right)^{1-\eta(i)} = \A(w)(i)
    \end{aligned}
\end{equation}
because $w^s \to w$ and $\eta w^s \to \eta w$. So we have that $\A(w^s) \to \A(w)$, which proves the continuity of $\A$ over $\cup_{\eta \in \{-1,1\}^m} \pi^\circ(\eta)$.
\end{proof}
Their second main result is that the relative error of each weak classifier produced by AdaBoost can be lower bounded.
\begin{proposition}[Lower bound on the relative error of weak classifiers]
    \begin{equation}
        \forall t \in \N, w_t^T \eta_{h_t} \geq \frac{1}{2^{t+1}}
    \end{equation}
\end{proposition}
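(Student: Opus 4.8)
First note that, with the convention in force in this section, $\eta_{h_t}$ is the mistake vector of $h_t$ (so that $w^{T}\eta_h$ is the weighted error of $h$ under $w$), and $h_t$ is chosen as a minimiser of $h\mapsto w_t^{T}\eta_h$; hence $w_t^{T}\eta_{h_t}=\epsilon_t$ and the statement is really a lower bound on the weighted error $\epsilon_t$ of the $t$-th weak learner. The plan is an induction on $t$ whose engine is the one-step contraction $\epsilon_t\ge\tfrac12\,\epsilon_{t-1}$.

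To obtain that contraction I would write the AdaBoost reweighting from $w_{t-1}$ to $w_t$ in closed form. Plugging $\alpha_{t-1}=\tfrac12\log\tfrac{1-\epsilon_{t-1}}{\epsilon_{t-1}}$ and $Z_{t-1}=2\sqrt{\epsilon_{t-1}(1-\epsilon_{t-1})}$ into the update, it collapses to $w_t(i)=\tfrac{w_{t-1}(i)}{2(1-\epsilon_{t-1})}$ when $h_{t-1}(x_i)=y_i$ and $w_t(i)=\tfrac{w_{t-1}(i)}{2\epsilon_{t-1}}$ when $h_{t-1}(x_i)\neq y_i$. Both multiplicative factors are $\ge\tfrac12$, since $\epsilon_{t-1}\le 1$ and $1-\epsilon_{t-1}\le 1$. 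Summing $w_t(i)$ over the points misclassified by $h_t$ and using this bound termwise gives $\epsilon_t\ge\tfrac12\sum_{i:\,h_t(x_i)\neq y_i}w_{t-1}(i)$, i.e. $\epsilon_t$ is at least half the weighted error of $h_t$ measured under the \emph{previous} distribution $w_{t-1}$. Since $h_{t-1}$ was chosen precisely to minimise that weighted error, this last quantity is $\ge\epsilon_{t-1}$, so $\epsilon_t\ge\tfrac12\epsilon_{t-1}$.

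Iterating the contraction yields $\epsilon_t\ge 2^{-(t-1)}\epsilon_1$, and it remains to settle the base case. Here $w_1$ is uniform, so $\epsilon_1$ is simply the unweighted empirical error of the best classifier of $\H$; running the dynamics in the interior $\Delta_m^{\circ}$ with $\H$ a set of genuinely weak learners (no $h\in\H$ classifies the uniformly weighted sample too well) gives $\epsilon_1\ge\tfrac14$, whence $w_t^{T}\eta_{h_t}=\epsilon_t\ge 2^{-(t+1)}$. If one only assumes that no weak learner is perfect, the same chain still delivers $\epsilon_t\ge 2^{-(t-1)}\epsilon_1\ge 2^{-(t-1)}/m>0$, which is all that is needed downstream to keep the orbit inside $\cup_{\eta}\pi^{\circ}(\eta)$ and to invoke the continuity of $\A$.

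The closed form of the update and the termwise estimate are routine; the one place that deserves care is to apply minimality of $\epsilon_{t-1}$ on the old distribution $w_{t-1}$ and not on $w_t$ (on $w_t$ the classifier $h_{t-1}$ has error exactly $\tfrac12$, which is useless here). The genuinely delicate point is the base case: the explicit constant $2^{-(t+1)}$, rather than $2^{-(t-1)}\epsilon_1$, is only as strong as the hypothesis one is prepared to make about how weak the weak learners are on the initial uniform weighting.
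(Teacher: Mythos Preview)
Your inductive engine is correct: the AdaBoost update does satisfy $w_t(i)\ge\tfrac12\,w_{t-1}(i)$ coordinatewise, and combining this with the optimality of $h_{t-1}$ under $w_{t-1}$ cleanly gives the one-step contraction $\epsilon_t\ge\tfrac12\,\epsilon_{t-1}$. The paper does not actually present a proof of this proposition; it only records that ``the proof is by induction, but is slightly more technical and needs to introduce more notations that we won't detail here''. So at the level of overall strategy your proposal agrees with what the paper indicates.

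The point you yourself flag as delicate is, however, a genuine gap and not a cosmetic one. Your chain delivers only $\epsilon_t\ge 2^{-(t-1)}\epsilon_1$, and reaching the stated constant $2^{-(t+1)}$ would require $\epsilon_1\ge\tfrac14$. Nothing in the standing assumptions forces this: even with uniform initial weights, a best weak learner with empirical error below $\tfrac14$ is completely typical, and in this section the dynamics are considered from an arbitrary $w_1\in\Delta_m^+$, so your fallback $\epsilon_1\ge 1/m$ is not available either. The paper's remark that the argument ``needs to introduce more notations'' points precisely to additional structure from the Belanich--Ortiz framework (the cells $\pi(\eta)$, $\pi^+(\eta)$ introduced just above) being used to set up or anchor the induction; the bare contraction alone, without an extra hypothesis on $\epsilon_1$, does not recover the stated bound.
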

The proof is by induction, but is slightly more technical and needs to introduce more notations that we won't detail here.\newline
In~\cite{belanich2012convergence}, the authors don't apply the Birkhoff Ergodic Theorem to the whole set $\Delta_m^+$, but only to a subset of it: the limit set of AdaBoost
that can be reached by an infinite number of iterations starting from weights in $\Delta_m^+$ that they denote $\Omega_\infty^+ = \cap_{t=1}^{\infty} \A^t(\Delta_m^+)$.
\begin{proposition}[Compactness of AdaBoost limit set]
    $\Omega_\infty^+$ is compact.
\end{proposition}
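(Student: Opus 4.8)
The plan is to realize $\Omega_\infty^+$ as a nested, decreasing intersection of subsets of the compact simplex $\Delta_m$, and then to upgrade ``nested intersection'' to ``compact'' by showing the set is closed — the only genuine work being to check that limits of AdaBoost orbits cannot leave the region $\bigcup_\eta \pi^+(\eta)$.

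First I would record the monotonicity. By the lower bound on the relative error of weak classifiers, along every orbit started in $\Delta_m^+$ the chosen weak learner keeps making strictly positive weighted error, so $\A(\Delta_m^+)\subseteq\Delta_m^+$; hence $\A^{t+1}(\Delta_m^+)=\A^t\bigl(\A(\Delta_m^+)\bigr)\subseteq\A^t(\Delta_m^+)$, and the sets $\A^t(\Delta_m^+)$ form a decreasing sequence whose intersection is $\Omega_\infty^+$. Each of them is nonempty, so $\Omega_\infty^+$ is nonempty by the finite intersection property once we know it is closed in $\Delta_m$; and since $\Delta_m$ is a closed bounded subset of $\R^m$, closedness of $\Omega_\infty^+$ is all that remains.

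For closedness, take $w^s\to w$ with $w^s\in\Omega_\infty^+$ and fix $t\ge 1$; I must produce $z\in\Delta_m^+$ with $\A^t(z)=w$. Since $w^s\in\A^{t+1}(\Delta_m^+)$, write $w^s=\A^t(u^s)$ with $u^s\in\A(\Delta_m^+)\subseteq\Delta_m^+$. The $u^s$ live in the compact set $\Delta_m$, so after passing to a subsequence $u^s\to z\in\Delta_m$. Here I would invoke the lower bound proposition again, applied to the orbit through $u^s$: it keeps each $u^s$ — and therefore the limit $z$ — inside $\bigcup_\eta \pi^+(\eta)\subseteq\bigcup_\eta\pi^\circ(\eta)$, the region on which the AdaBoost update (hence each iterate $\A^t$, whose intermediate points $\A^j(u^s)$ all stay in $\Delta_m^+$) is continuous by the Continuity of AdaBoost Update proposition. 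Continuity then gives $\A^t(z)=\lim_s \A^t(u^s)=\lim_s w^s=w$ with $z\in\Delta_m^+$, so $w\in\A^t(\Delta_m^+)$. As $t$ was arbitrary, $w\in\Omega_\infty^+$, proving $\Omega_\infty^+$ closed, hence compact.

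The delicate point is the step where the limit $z$ must be shown to remain in $\bigcup_\eta \pi^+(\eta)$: the sets $\pi^+(\eta)$ are cut out by the strict inequality $\eta^\top w>0$, so $\Delta_m^+$ is not itself closed, and a priori a limit of weights could be one for which the optimal weak learner attains exactly zero weighted error, escaping both the set and the region of continuity. What rules this out is precisely the quantitative bound $w_t^\top\eta_{h_t}\ge 2^{-(t+1)}$: it supplies, on the forward-invariant portion $\A(\Delta_m^+)$ of $\Delta_m^+$ that $\Omega_\infty^+$ actually lies in, a margin keeping orbits and their accumulation points bounded away from the zero-error locus. I would therefore run the limiting argument on $\A(\Delta_m^+)$ rather than on all of $\Delta_m^+$ — which is exactly what is available above, since the preimages $u^s$ already lie in $\A(\Delta_m^+)$ — and this interplay between the continuity proposition and the error lower bound is the main obstacle to make fully rigorous.
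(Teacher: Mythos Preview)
The paper does not actually give a proof of this proposition: it merely states the result, attributing it to Belanich and Ortiz, and immediately moves on to use it. So there is no ``paper's own proof'' to compare against; your proposal is being held up next to a blank.

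On its own merits, your overall strategy --- realize $\Omega_\infty^+$ as a decreasing intersection inside the compact simplex and then prove closedness via a sequential argument using the continuity proposition together with the error lower bound --- is the natural one and is in the spirit of the source paper the present survey is summarizing. Two points deserve more care than you give them. First, you write $\bigcup_\eta \pi^+(\eta)\subseteq\bigcup_\eta\pi^\circ(\eta)$ as if it were obvious, but these are different conditions: $\pi^+(\eta)$ asks that the optimal dichotomy have strictly positive edge, while $\pi^\circ(\eta)$ asks that $w$ sit in the interior of the cell where $\eta$ is optimal (roughly, that the argmin be unique). A limit $z$ could well lie on a face where two dichotomies tie, and the paper's continuity statement is only asserted on $\bigcup_\eta\pi^\circ(\eta)$; you need either a stronger continuity claim on ties or an argument that ties cannot occur in the limit set. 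Second, as you yourself flag, the lower bound $w_t^\top\eta_{h_t}\ge 2^{-(t+1)}$ depends on the time index and degrades to zero, so it does not by itself give a uniform margin on $\A(\Delta_m^+)$: applying it at a single fixed step (your ``$u^s\in\A(\Delta_m^+)$'') does give a uniform bound along that one step, but you still have to control \emph{all} intermediate iterates $\A^j(z)$, $0\le j<t$, to push continuity of $\A^t$ through. You acknowledge this as ``the main obstacle to make fully rigorous,'' and that assessment is accurate --- the sketch is sound but this is exactly where the real work in the original reference lies.
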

The compactness of this set allows the authors to apply the Birkhoff Ergodic Theorem over $\Omega_\infty^+$.  We thus have the following proposition:
\begin{proposition}[AdaBoost is Ergodic over $\Omega_\infty^+$]
    The average over any AdaBoost sequence starting at $w_1 \in \Omega_\infty^+$ converges. More precisely,
    \begin{equation}
        \forall w_1\in \Omega_\infty^+, \forall f \in L^1(\mu), \frac{1}{T}\sum_{t=0}^{T-1} f(\A^t(w_1)) \underset{T\to\infty}{\to} L(f)\in \R
    \end{equation}
\end{proposition}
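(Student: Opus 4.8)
The plan is to present $(\Omega_\infty^+,\A)$ as a measure-preserving dynamical system and to read off the claim from the Birkhoff Ergodic Theorem~\ref{thm:Birkhoff}, with $L(f)=\int f\,d\mu$ for a suitable ergodic invariant measure $\mu$. First I would check that $\A$ maps $\Omega_\infty^+$ into itself: from $\Omega_\infty^+=\bigcap_{t\ge1}\A^t(\Delta_m^+)$ one gets $\A(\Omega_\infty^+)\subseteq\bigcap_{t\ge2}\A^t(\Delta_m^+)$, and this is again $\Omega_\infty^+$ as soon as the sets $\A^t(\Delta_m^+)$ are nested decreasing, i.e.\ as soon as $\A(\Delta_m^+)\subseteq\Delta_m^+$; the latter is exactly what the lower bound $w_t^T\eta_{h_t}\ge 2^{-(t+1)}$ provides, since it forbids an AdaBoost step from driving the weighted error of the next optimal classifier to $0$, so the image still lies in some $\pi^+(\eta)$. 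Together with the already-established facts that $\Omega_\infty^+$ is compact (Compactness of AdaBoost limit set) and that $\A$ is continuous on $\bigcup_\eta\pi^\circ(\eta)\supseteq\Omega_\infty^+$ (Continuity of AdaBoost Update; a point of $\Omega_\infty^+$ lies in $\Delta_m^+$, hence in some $\pi^+(\eta)$, and being an $\A$-image it sits in the interior cell $\pi^\circ(\eta)$), this makes $\A$ a continuous self-map of a compact metric space.

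By the Krylov–Bogoliubov Theorem~\ref{thm:Krylov-Bogoliubov} applied to $(\Omega_\infty^+,\A)$, the set $\mathcal{M}$ of $\A$-invariant Borel probability measures on $\Omega_\infty^+$ is non-empty; it is convex and weak-$*$ compact, so it has an extreme point, which is an ergodic invariant measure $\mu$. Then $(\Omega_\infty^+,\Sigma_m,\mu,\A)$ is a dynamical system in the sense of Theorem~\ref{thm:Birkhoff}, which yields, for every $f\in L^1(\mu)$, an $\A$-invariant $f^*$ with $\frac1T\sum_{t=0}^{T-1}f(\A^t(w_1))\to f^*(w_1)$ for $\mu$-almost every $w_1$; ergodicity of $\mu$ pins $f^*$ down to the constant $\int f\,d\mu=:L(f)\in\R$. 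This already proves the proposition for $\mu$-almost every starting weight.

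The main obstacle is to promote this to \emph{every} $w_1\in\Omega_\infty^+$, as stated. The clean way is to prove that $\mu$ is in fact the \emph{unique} element of $\mathcal{M}$ (unique ergodicity of $(\Omega_\infty^+,\A)$): the uniform ergodic theorem then gives $\frac1T\sum_{t=0}^{T-1}f(\A^t(w_1))\to\int f\,d\mu$ for every $w_1$ and every \emph{continuous} $f$, uniformly, after which passing to general $f\in L^1(\mu)$ requires an extra density-plus-control argument for the Cesàro averages along individual orbits. A more hands-on alternative uses that $\Omega_\infty^+$ is genuinely a \emph{limit} set, so every $w_1$ in it is an $\A$-iterate of points interior to $\Delta_m^+$; combining continuity of $\A$ with the lower bound $w_t^T\eta_{h_t}\ge 2^{-(t+1)}$, one controls how far the orbit of $w_1$ can drift from orbits to which Birkhoff already applies. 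Either route is precisely where the substantial work of~\cite{belanich2012convergence} concentrates; the sketch above reduces the proposition to it, granted the continuity, compactness and lower-bound results stated earlier.
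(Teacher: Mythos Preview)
Your outline is precisely the route the paper takes: it lines up the Continuity of AdaBoost Update, the lower bound, and the compactness of $\Omega_\infty^+$ so that Krylov--Bogoliubov (Theorem~\ref{thm:Krylov-Bogoliubov}) furnishes an invariant $\mu$ and Birkhoff (Theorem~\ref{thm:Birkhoff}) then delivers the Ces\`aro convergence; the paper itself does not spell out more than this, writing only that ``the proof is technical, and does not present much interest for our purpose'' and deferring to~\cite{belanich2012convergence}. You actually go further than the paper by explicitly isolating the $\mu$-a.e.\ versus \emph{every} $w_1\in\Omega_\infty^+$ gap and naming unique ergodicity as the natural mechanism to close it---this is exactly the substantive point the paper elides.
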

Again, the proof is technical, and does not present much interest for our purpose. Finally, the authors show a last big result which they prove under different hypothesis. 
Here as well, we won't go too much into the details here, but the theorem resembles to the following:
\begin{theorem}[AdaBoost is Ergodic and Converges to a Cycle]\label{thm:cycle}
    Let $\A^\tau$ a specific sequence of functions that converge towards the AdaBoost update $\A$ uniformly over $\Delta_m$. (In practice, 
    those functions are explicit in the paper, and the author show the uniform convergence). Then,
    \begin{itemize}
        \item $(w_t)$ converges in finite time to a cycle in $\Delta_m^+$ of period $p$.
        \item The AdaBoost system is ergodic.
        \item Let $T_0$ be the first time at which $w_t$ enters the cycle. Then, for any $f \in L^1(\mu)$, we have that:
            \begin{equation}
                \frac{1}{T}\sum_{t=0}^{T - 1} f(\A^t(w_1)) \underset{T\to\infty}{\to} \frac{1}{p} \sum_{t=0}^{p-1} f(\A^{T_0+t}(w_1))
            \end{equation}
    \end{itemize}
\end{theorem}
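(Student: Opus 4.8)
The plan is to exploit three facts already established: the multiplicative update $\A$ is explicit and piecewise-analytic on the finitely many cells $\pi^\circ(\eta)$, $\eta\in{\{-1,1\}}^m$; the edge lower bound $w_t^T\eta_{h_t}\ge 2^{-(t+1)}$ keeps the orbit away from the degenerate faces and, together with the compactness of $\Omega_\infty^+$, confines it to a compact set on which $\A$ behaves well; and Krylov--Bogoliubov together with Birkhoff already give an invariant measure and convergence of ergodic averages once one works with a genuinely continuous map. Since $\A$ itself is only continuous off the cell boundaries (ties in the weak-learner selection), I would first carry out the whole argument for the regularised maps $\A^\tau$ — which by hypothesis converge to $\A$ uniformly on $\Delta_m$ and, being continuous on all of $\Delta_m$, fall directly under Krylov--Bogoliubov — and then transfer the conclusions to $\A$ by letting $\tau\to\infty$.

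First I would show that every $\A^\tau$-orbit is eventually cell-periodic. Writing the update in logarithmic coordinates, $\log w_{t+1}(i)=\log w_t(i)-\alpha_t\eta_t(i)-\log Z_t$ with $\alpha_t$ the value annihilating $w_{t+1}^T\eta_t$, the restriction of $\A^\tau$ to a cell is a fixed map, so an orbit is determined by its itinerary $(\eta_1,\eta_2,\dots)$ among finitely many symbols. Using compactness of $\Omega_\infty^+$ and the edge bound to rule out escape toward the boundary, I would argue that some itinerary block repeats, and that the return map associated with one full period of cells is a strict contraction in a suitable metric — the Hilbert projective metric on the simplex is the natural candidate, since each single step is a projective map composed with a coordinatewise rescaling, and a positive matrix acts as a strict contraction for that metric. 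A strict contraction on the relevant compact cell has a unique fixed point, which is exactly a periodic point of $\A^\tau$ of period $p$; hence the orbit enters this $p$-cycle after a finite time $T_0^\tau$, uniformly for starting points in a neighbourhood.

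With the $p$-cycle in hand, the remaining items for $\A^\tau$ are immediate: the only $\A^\tau$-invariant Borel probability measure supported on the orbit closure is the uniform average $\mu=\frac1p\sum_{t=0}^{p-1}\delta_{(\A^\tau)^{T_0+t}(w_1)}$, this measure is ergodic because a finite periodic orbit is a minimal system with no non-trivial invariant subsets, and Birkhoff's ergodic theorem then gives $\frac1T\sum_{t=0}^{T-1}f\big((\A^\tau)^t(w_1)\big)\to\int f\,d\mu=\frac1p\sum_{t=0}^{p-1}f\big((\A^\tau)^{T_0+t}(w_1)\big)$ for every $f\in L^1(\mu)$. Finally I would let $\tau\to\infty$: the uniform convergence $\A^\tau\to\A$ on $\Delta_m$, combined with the continuity of $\A$ on the open cells and the uniform entry time and basin size obtained above, lets me pass to the limit in the orbit, so $(w_t)$ under $\A$ converges in finite time to a cycle of period $p$ in $\Delta_m^+$, the limiting measure is still ergodic and $\A$-invariant, and the ergodic-average identity survives the limit.

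The hard part will be the middle step: showing the itinerary is eventually periodic and, above all, that one full loop through a sequence of cells acts as a \emph{strict} contraction. This is where the edge lower bound and the compactness of $\Omega_\infty^+$ must be used quantitatively to bound the contraction factor away from $1$, and it is precisely the reason the regularised maps $\A^\tau$ are introduced — they remove the boundary ambiguity that would otherwise break both the continuity needed for Krylov--Bogoliubov and the clean ``enters a cycle in finite time'' statement. A secondary technical obstacle is the transfer $\tau\to\infty$: finite-time entry into a cycle is not automatically stable under uniform limits, so one must control the entry times $T_0^\tau$ and the basins uniformly in $\tau$ before passing to the limit.
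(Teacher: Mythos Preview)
The paper does not actually prove this theorem: it is stated in the subsection on ergodic dynamics as a summary of a result from Belanich and Ortiz, prefaced by the caveat that ``we won't go too much into the details here, but the theorem resembles to the following'', and the surrounding propositions are likewise left unproved as ``technical''. There is therefore no proof in the paper to compare your sketch against; the paper only records the statement and points to the original reference.

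That said, your outline has a genuine gap at exactly the step you flag as the hard one. You propose to obtain eventual periodicity by showing that the return map through one itinerary loop is a strict contraction in the Hilbert projective metric, on the grounds that ``a positive matrix acts as a strict contraction for that metric''. But on a fixed cell the AdaBoost update is $w_i\mapsto w_i\,e^{-\alpha(w)\eta_i}/Z$, i.e.\ a \emph{diagonal} rescaling followed by normalisation. Diagonal positive matrices are \emph{isometries} of the Hilbert metric, not contractions: Birkhoff's contraction theorem requires a matrix with all entries strictly positive (equivalently, the image cone must have finite projective diameter), which a diagonal map never satisfies. The only nonlinearity is the dependence of $\alpha$ on $\epsilon(w)$, but any two weight vectors with the same weighted error receive the same diagonal scaling and hence remain at exactly the same Hilbert distance after the step. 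Consequently the composition over a full loop of cells is not a Hilbert contraction for the reason you state, and the Banach fixed-point argument does not close. Observing that an itinerary block repeats is necessary but, absent a genuine contraction or some other rigidity argument exploiting that consecutive $\eta_t$ differ, it is not sufficient to conclude that the \emph{weights} themselves become periodic; whatever mechanism the cited reference uses to force the orbit onto a cycle, it is not the one you sketch here.
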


That shows that after a high number of iterations, AdaBoost becomes perfectly cyclic. That confirms the intuition we can
have Sec.~\ref{subsec:Diversity}. 

However, this theorem is not completely satisfying. Indeed, this demonstrates that AdaBoost cycles w.r.t.~the weights (thus w.r.t.~the estimators $h$), but
the cycle may be very long, and take a lot of iterations to be reached. In practice, we want to know if AdaBoost cycles w.r.t.~the estimators $h$. The problem is 
that a cycle w.r.t.~the estimators $h$ may not imply a cycle w.r.t.~the weights. Indeed, the weights are not unique for a given estimator $h$. This is what 
motivated a discussion that we present in a future paper we will publish in a short time.

\section{Conclusion \& Acknowledgements}

\subsection{Conclusion}

The primary objective of this paper has been to provide a comprehensive and expansive overview of AdaBoost, exploring the diverse interpretations and facets 
that extend beyond its initial introduction as a PAC learning algorithm. We have uncovered various ways to perceive and understand AdaBoost, highlighting the 
significance of comprehending each interpretation to attain a comprehensive understanding of the algorithm's inner workings. In particular, the ergodic dynamics 
of AdaBoost have emerged as a compelling avenue of research, offering a new perspective that can foster theoretical advancements and shed light on its long-term behavior.

Although AdaBoost's iterative dynamics are explicitly accessible at each step, comprehending its overall behavior on a global scale remains a challenging task. 
Consequently, studying the ergodic dynamics of AdaBoost has become an active and fruitful research domain for the past 30 years. By viewing AdaBoost as an ergodic dynamical system, 
novel theoretical frameworks and perspectives can be developed, enhancing our comprehension of its convergence properties, generalization bounds, and optimization landscape. 
This approach opens doors to explore the interplay between AdaBoost and related fields, such as deep learning, where phenomena like double descent have garnered significant attention.

Remarkably, the presence of a similar behavior to the double descent phenomenon was observed soon after AdaBoost's introduction, yet it has not received extensive study except for a few notable papers. 
Given the recent surge of interest in double descent within the realm of deep learning, establishing connections and parallels between the dynamics of AdaBoost iterations and deep 
learning can significantly advance our understanding of both domains. By bridging these two fields of research, we can gain valuable insights into the intricate dynamics governing 
AdaBoost, leading to a deeper appreciation of its predictive power and potential applications.

This paper aims to serve various purposes for its readers. For those unfamiliar with AdaBoost, it offers a clear and concise introduction to the algorithm, elucidating 
its multiple interpretations and shedding light on its fundamental principles. By presenting the algorithm's different perspectives in an accessible manner, readers can 
develop a solid foundation in AdaBoost and its significance within the broader machine learning landscape. Furthermore, experienced readers will find value in the paper's 
ability to establish connections and unify diverse views of AdaBoost, presenting a cohesive and comprehensive understanding of the algorithm. This synthesis of perspectives 
provides a launching pad for future research endeavors centered around AdaBoost's dynamics, enabling scholars to delve deeper into its behavior and explore novel research directions.

In conclusion, this paper has endeavored to provide a thorough exploration of AdaBoost, transcending its initial formulation as a PAC learning algorithm. By unifying 
the diverse interpretations and facets of AdaBoost, we have unveiled the intriguing concept of ergodic dynamics, which holds promise for advancing theoretical frameworks 
and deepening our comprehension of the algorithm's behavior. We have also highlighted the significance of investigating the double descent phenomenon and establishing 
connections between AdaBoost other fields. By presenting AdaBoost in a multi-faceted manner, this paper caters to both novice and experienced readers, fostering a 
comprehensive understanding of the algorithm and paving the way for future research endeavors in AdaBoost's dynamics and its broader implications in machine learning.

\subsection{Acknowledgements}
I would like to deeply thank Nicolas Vayatis and Argyris Kalogeratos for the time and effort they put in this project. I would also like to thank the 
IDAML Chair of Centre Borelli and its private partners for the funding of this project, without which this work would not have been possible.

\printbibliography

\begin{appendices}
    \section{How to choose the base space of estimators?}\label{sec:How to choose the base space of estimators?}
From original to most modern versions of AdaBoost, it is always mentioned that AdaBoost, as any other boosting method,
should be executed on a set of \textit{weak estimators}. The definition of a weak estimator is not precise. In the original
papers, boosting was considered as a PAC learning algorithm meaning that each estimator had at least a slightly better 
performance as a purely random estimator. Mathematically, this can write, for a classification problem with $K$ classes, as follows:\newline
If the observation $x$ has the (true) label $k$, then there exists $\delta > 0$ such that
\begin{equation}
    \P(y=k|h(x)=k) \geq \P_{prior}(y=k) + \delta
\end{equation}
where $\P_{prior}$ is a fixed prior that we chose according to our knowledge. For instance,
we could fix $\P_{prior}(y=k) = \frac{\sum_{i=1}^m \mathbbm{1}_{y_i=k}}{m}$.

\subsection{Too weak or too strong estimators}

However, would boosting be relevant if our estimators were already strong in that sense? To properly see this, let's take
$\H_d(\X)$ the set of decision trees of depth $d$ over the set $\X$. Suppose we have $m$ observations in $\X$. It is
of course possible to build a tree $h\in \H_m(\X)$ which achieves 100\% accuracy on the set. Nonetheless, AdaBoost is not
even designed to generate the next iteration of such classifier, as it is suppose to have an error $\epsilon > 0$. But even considering
decision trees of depth sufficiently small to ensure that none of them can classify the whole set, boosting can lose sense. Indeed, 
we can have two opposite phenomenas:
\begin{itemize}
    \item \textit{Too weak} estimators
    \item \textit{Too strong} estimators
\end{itemize}
How can we have \textit{too weak} estimators? Considering a classification task with $K>>2$ classes, we need to have complex enough
weak estimators for boosting to properly work. If $\H_1(\X)$ is our set of weak estimators, each of the tree of this set will
achieve a very low accuracy, and probably way less than any random guesser if the data is complex enough to be non-linearly seperable
for instance. 

Now, we can have estimators that are \textit{too strong} as well. Indeed, if each of the tree in the boosting sequence
achieves an accuracy which is close to perfect accuracy, the benefits from boosting methods vanish automatically. Of course,
it still depends on what you aim for in your problem, but using a boosting method to fit $100$ trees of depth $10$ on a complex
task can require much more time than training a single strong classifier. Plus, this will not likely prevent overfitting as
the sequence of decision trees will not vary a lot (because each tree achieves very satisfying accuracy on the train set), meaning
that the boosting classifier will truly use only a few different decision trees when you expect it to use tens or hundreds different
ones to aggregate the result.

\subsection{How to detect too good or too weak estimators?}
The easier to detect is when your estimator are too strong. It is also probably the one that is the most likely to
happen. Indeed, in that case, you can observe several things running boosting:
\begin{itemize}
    \item Each one of the estimator in the sequence achieves high accuracy on the train set.
    \item The \textit{similarity} between the estimators is too high.
    \item The difference between the precision of each estimator and the precision of the aggregated estimator is small.
\end{itemize}
Let's take an example here. Consider the \href{https://www.kaggle.com/datasets/zalando-research/fashionmnist}{Fashion MNIST dataset}. We consider several sets of estimators:
$\H_1(\X),\dots,\H_{15}(\X)$. For each depth, we run AdaBoost for $T=\{10,20,\dots,100\}$ iterations. 
We then compare the difference between the mean accuracy of each decision tree and the ensemble model. Also, we compute the
similarity between each estimator of the sequence.

\begin{figure}
    \centering
    \includegraphics[width=0.8\textwidth]{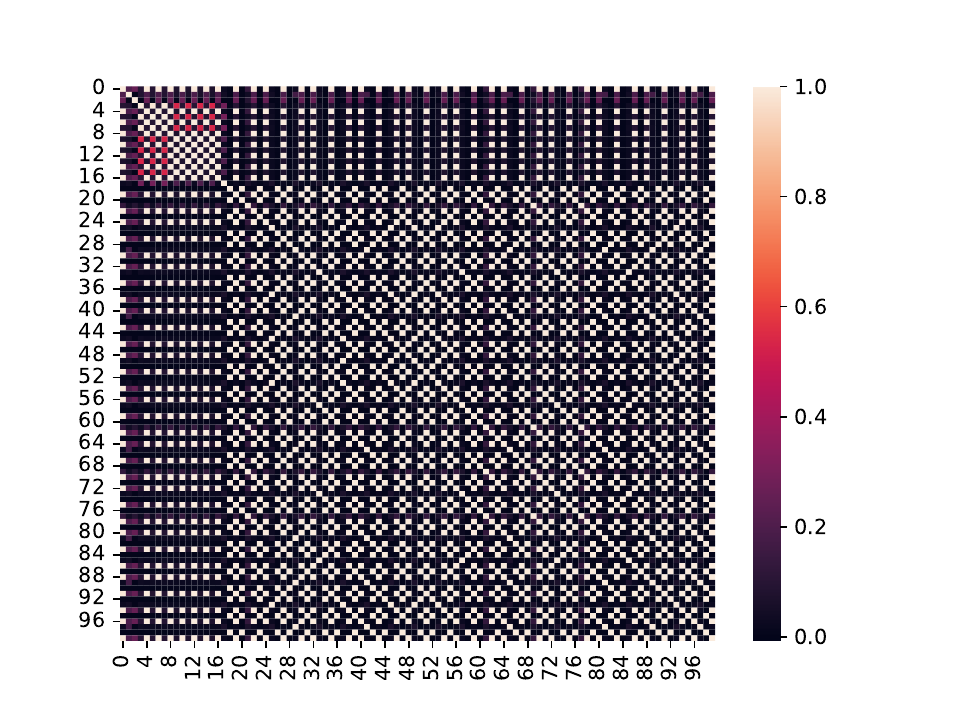}
    \caption{Similarity matrix for depth $1$. In $(i,j)$, the similarity measure between classifier
    $i$ and $j$ measured by kappa statistic. We recognize here the cyclic patterns due to the lack of
    complexity of our set of estimators.}
  \label{fig:Sim_matrix_depth1}
\end{figure}

\begin{figure}
    \centering
    \includegraphics[width=0.8\textwidth]{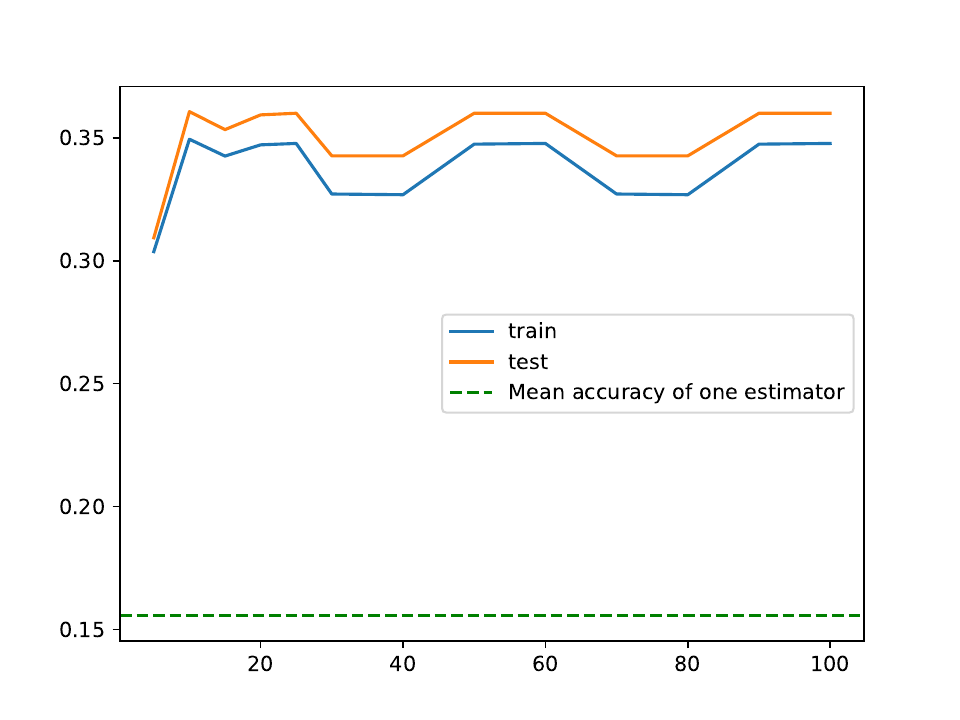}
    \caption{Accuracy for depth $1$. On the $x$-axis, the number of estimators kept to build the ensemble, 
    and on the $y$-axis, the accuracy of the considered estimator.}
  \label{fig:acc_depth1}
\end{figure}

\begin{figure}
    \centering
    \includegraphics[width=0.8\textwidth]{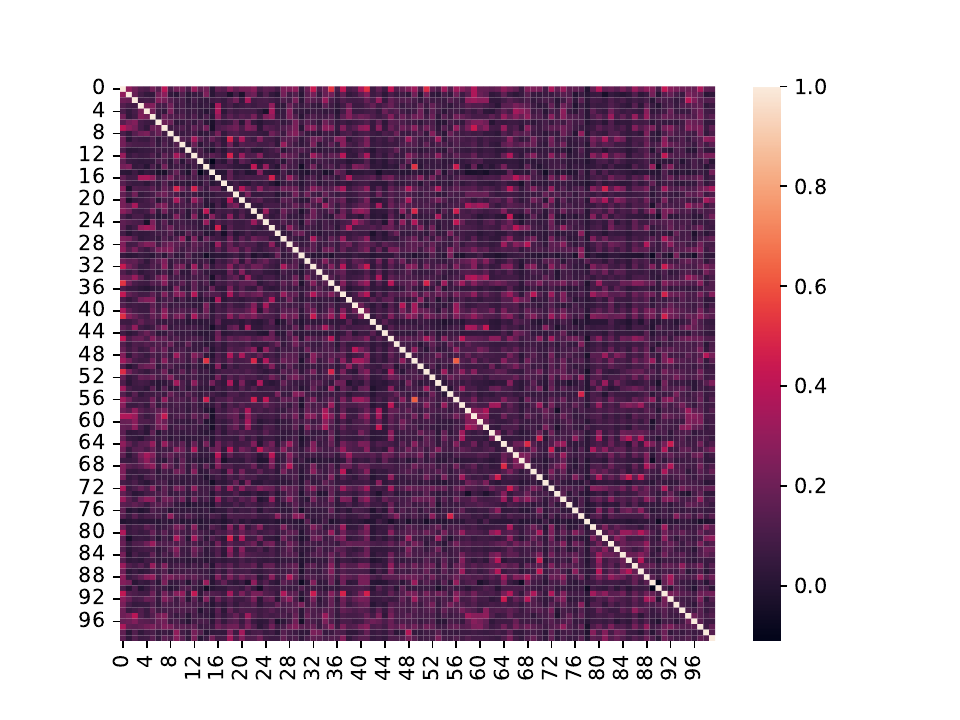}
    \caption{Similarity matrix for depth $4$. In $(i,j)$, the similarity measure between classifier
    $i$ and $j$ measured by kappa statistic. Here, the produced sequence is very diverse and all estimators
    are sufficiently different from each other to have a true benefit from boosting.}
  \label{fig:Sim_matrix_depth4}
\end{figure}

\begin{figure}
    \centering
    \includegraphics[width=0.8\textwidth]{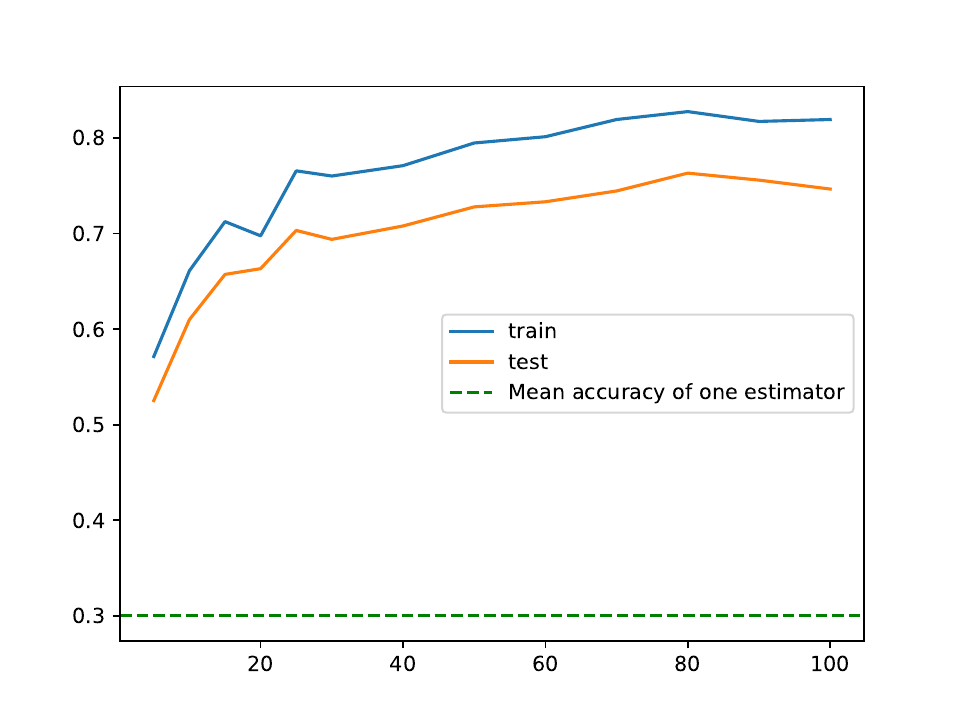}
    \caption{Accuracy for depth $4$. On the $x$-axis, the number of estimators kept to build the ensemble, 
    and on the $y$-axis, the accuracy of the considered estimator.}
  \label{fig:acc_depth4}
\end{figure}

\begin{figure}
    \centering
    \includegraphics[width=0.8\textwidth]{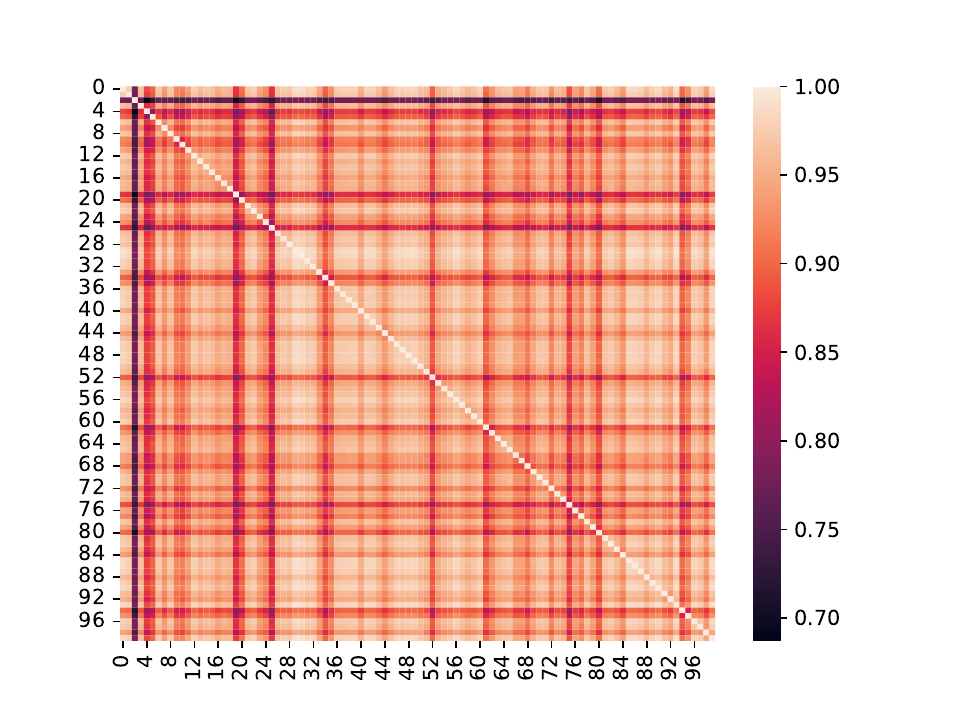}
    \caption{Similarity matrix for depth $15$. In $(i,j)$, the similarity measure between classifier
    $i$ and $j$ measured by kappa statistic. As we can see, the similarity is very high between each
    estimator, meaning that the boosting sequence is not diverse enough because the estimators are too strong.}
  \label{fig:Sim_matrix_depth15}
\end{figure}

\begin{figure}
    \centering
    \includegraphics[width=0.8\textwidth]{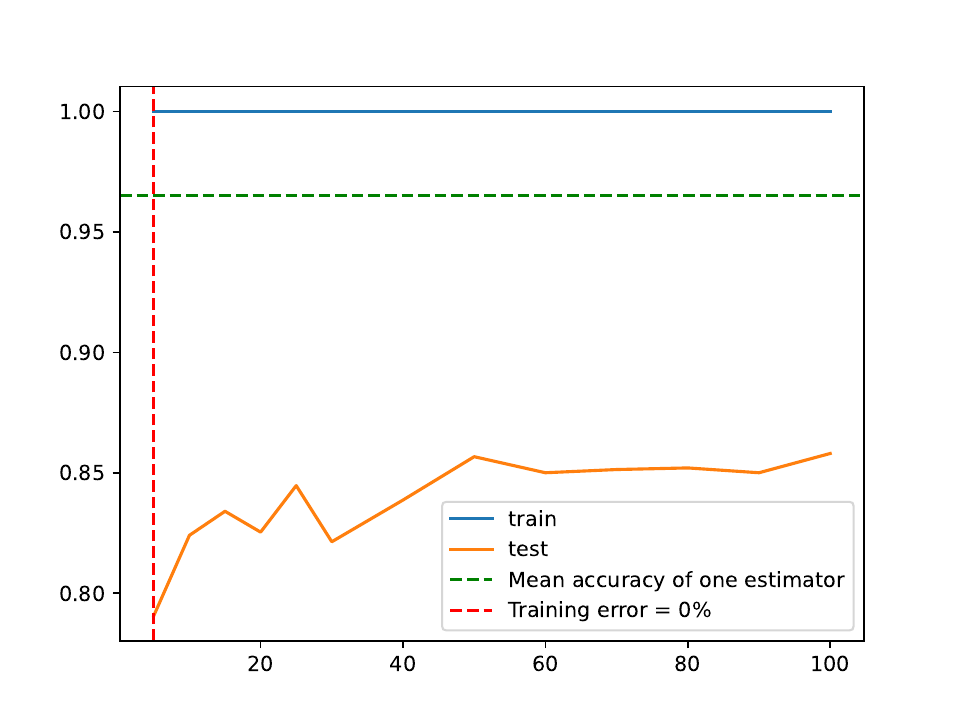}
    \caption{Accuracy for depth $15$. On the $x$-axis, the number of estimators kept to build the ensemble, 
    and on the $y$-axis, the accuracy of the considered estimator.}
  \label{fig:acc_depth15}
\end{figure}

On the Fig.~\ref{fig:Sim_matrix_depth1}, \ref{fig:acc_depth1}, \ref{fig:Sim_matrix_depth4}, \ref{fig:acc_depth4}, \ref{fig:Sim_matrix_depth15},
\ref{fig:acc_depth15}, \ref{fig:mean_similarity}, we see that if we take a depth of $1$, the estimators are too weak and we cannot learn properly as the algorithm
stucks itself in a cycle to quickly. However, we have the opposite with depth $15$ for which boosting seems quite useless, and all estimators
seem to be the same. A good set of classifiers here is thus trees of depth $4$ or $5$, for example.
We can also observe that the more more complex the set of estimator is, the more similar the sequence of estimator is. 

\begin{figure}
    \centering
    \includegraphics[width=0.8\textwidth]{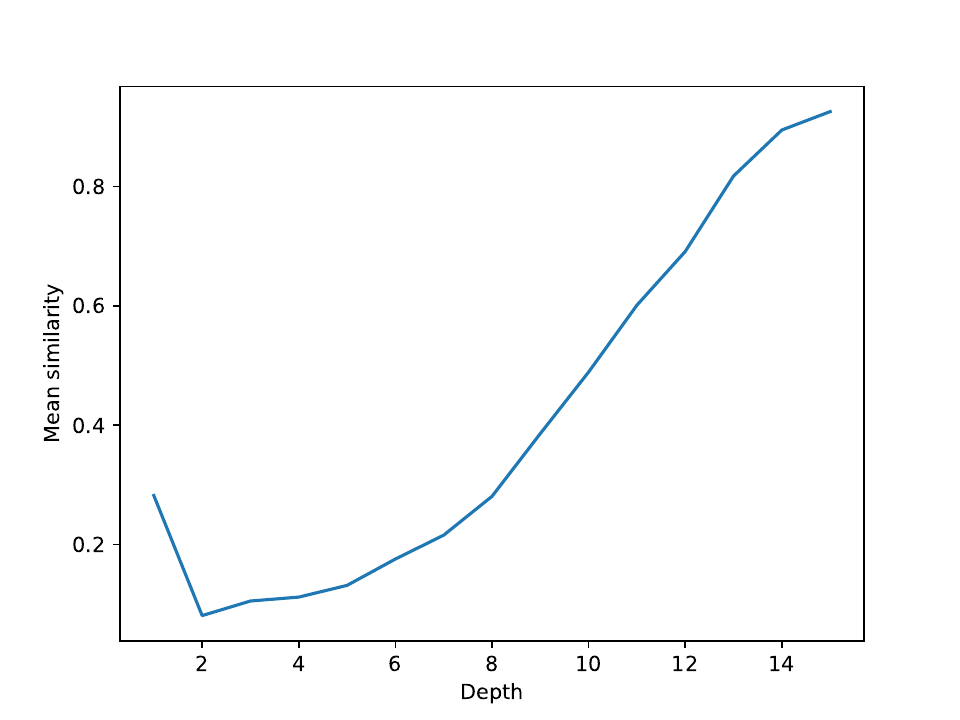}
    \caption{Mean similarity inside the sequence of estimators. We notice that the more complex the set of estimator is, the more
    similar the sequence of estimators become.}
  \label{fig:mean_similarity}
\end{figure}

\end{appendices}

\end{document}